\pdfoutput=1
\documentclass{article}

\PassOptionsToPackage{numbers, compress}{natbib}
 \usepackage[preprint]{neurips_2026}

\usepackage{xspace}
\newcommand{\ours}{PoEM\xspace}
\newcommand{\methodm}{\mathrm{PoEM}}
\usepackage{algorithm}
\usepackage{algpseudocode}

\usepackage[utf8]{inputenc} % allow utf-8 input
\usepackage[T1]{fontenc}    % use 8-bit T1 fonts
\usepackage{hyperref}       % hyperlinks
\usepackage{url}            % simple URL typesetting
\usepackage{booktabs}       % professional-quality tables
\usepackage{amsmath}        % math
\usepackage{amsfonts}       % blackboard math symbols
\usepackage{amssymb}        % \gtrsim, \lesssim, etc.
\usepackage{bm}             % bold math symbols
\usepackage{xspace}         % spacing for \eg, \ie macros
\usepackage{nicefrac}       % compact symbols for 1/2, etc.
\usepackage{microtype}      % microtypography
\usepackage{xcolor}         % colors
\usepackage{todonotes}      % for \todo
\usepackage{graphicx}       % for \includegraphics
\usepackage{enumitem}

\usepackage{amsthm}
\theoremstyle{plain}
\newtheorem{theorem}{Theorem}[section]
\newtheorem{proposition}[theorem]{Proposition}

\theoremstyle{definition}

\theoremstyle{remark}
\newtheorem{remark}[theorem]{Remark}

\definecolor{cPGRPO}{HTML}{e99638}
\definecolor{cPDPO}{HTML}{184F95}
\definecolor{cRMRB}{HTML}{C0392B}
\definecolor{cRMHH}{HTML}{4A3AA7}
\definecolor{cRMPUB}{HTML}{0F7A55}
\definecolor{cSD}{HTML}{52514E}
\newcommand{\basisname}[2]{\textcolor{#1}{\texttt{#2}}\xspace}
\newcommand{\pgrpo}{\basisname{cPGRPO}{P-GRPO}}
\newcommand{\pdpo}{\basisname{cPDPO}{P-DPO}}
\newcommand{\rmrb}{\basisname{cRMRB}{RM-RB}}
\newcommand{\rmhh}{\basisname{cRMHH}{RM-HH}}
\newcommand{\rmpub}{\basisname{cRMPUB}{RM-Div}}
\newcommand{\sdddpo}{\basisname{cSD}{SD-DDPO}}

\makeatletter
\newcommand\FloatBarrier{\par\begingroup \let\@elt\relax
  \edef\@tempa{\@botlist\@deferlist\@dbldeferlist}%
  \ifx\@tempa\@empty \else
    \ifx\@fltovf\relax \clearpage
    \else \newpage \let\@fltovf\relax \FloatBarrier \fi
  \fi \endgroup}
\makeatother
\title{
PoEM: \\
Predicting RL Outcomes from Existing Policies \\
}

\author{%
  Kimia Hamidieh\thanks{Correspondence to \texttt{hamidieh@mit.edu}} \\
  MIT CSAIL
  \And
  Giannis Daras \\
  MIT CSAIL
  \And
  Antonio Torralba \\
  MIT CSAIL
}

\begin{document}

\maketitle

\begin{abstract}
  Foundation models are post-trained with reinforcement learning (RL) to maximize specific rewards, such as human alignment, correctness, or instruction following. This post-training process is computationally intensive, sometimes unstable, and has to be run from scratch every time the reward model changes or when we want to combine multiple rewards. We hence ask: given a new reward function, \textit{is it possible to predict the RL outcomes without actually running RL on it}? We answer this in the affirmative by introducing \ours, a framework to predict the outputs of RL on a new reward function using a set of models already post-trained on other rewards. First, we show that if the new reward function can be written as a linear combination of existing ones, then the new policy in log-space can be written as a linear combination of the existing log-policies. Surprisingly, even in cases where the rewards are \textit{not} linearly connected, we observe that often log-policies from RL training span an approximately low-rank subspace across rewards. To our benefit, the weighting coefficients for this combination can be estimated using only the reward or basis policy outputs on the samples. We turn these observations into an algorithm that takes post-trained models and a new reward function, and approximates the target RL policy without actually running any additional RL training. We experimentally validate our approach across synthetic and real rewards, spanning both text and image modalities.

\end{abstract}

\section{Introduction}
Model post-training has become a crucial step in adapting frontier models to align with human preferences and achieve desirable outcomes~\citep{christiano2017deep,stiennon2020learning,ouyang2022traininglanguagemodelsfollow}. Unfortunately, post-training through Reinforcement Learning (RL) is expensive, sometimes unstable, and has to be redone every time the reward changes. Current heuristics, such as averaging adapter weights~\citep{rame2023rewarded,jang2023personalized,ilharco2022editing}, degrade as the number of models to be combined increases~\citep{yadav2023ties,yang2023adamerging}, and alternative methods like best-of-N sampling~\citep{nakano2021webgpt,cobbe2021training} remain effective only within narrow regimes. Our broad motivation is to predict the RL outcome on a new reward without running RL on it. Concretely, we ask:
\vspace{-0.5em}
\begin{center}
\emph{Given a basis of single-reward post-trained adapters, can we predict the RL outcome on a new reward without training?}
\end{center}
\vspace{-0.5em}

We start by making the simple theoretical observation that if the new reward is a linear combination of the existing rewards, there is a convenient closed-form solution for the optimum policy: it is a weighted log-mixture of the base model and the policies that have been trained on the existing rewards. We further make the experimental observation that the coefficients of this log-mixture can be estimated (if unknown) through samples by performing a linear regression on the reward outcomes. This observation allows us to simulate the RL outcomes on linear combinations of existing rewards without actually running RL.

\begin{figure}[t]
    \centering
    \includegraphics[width=\linewidth]{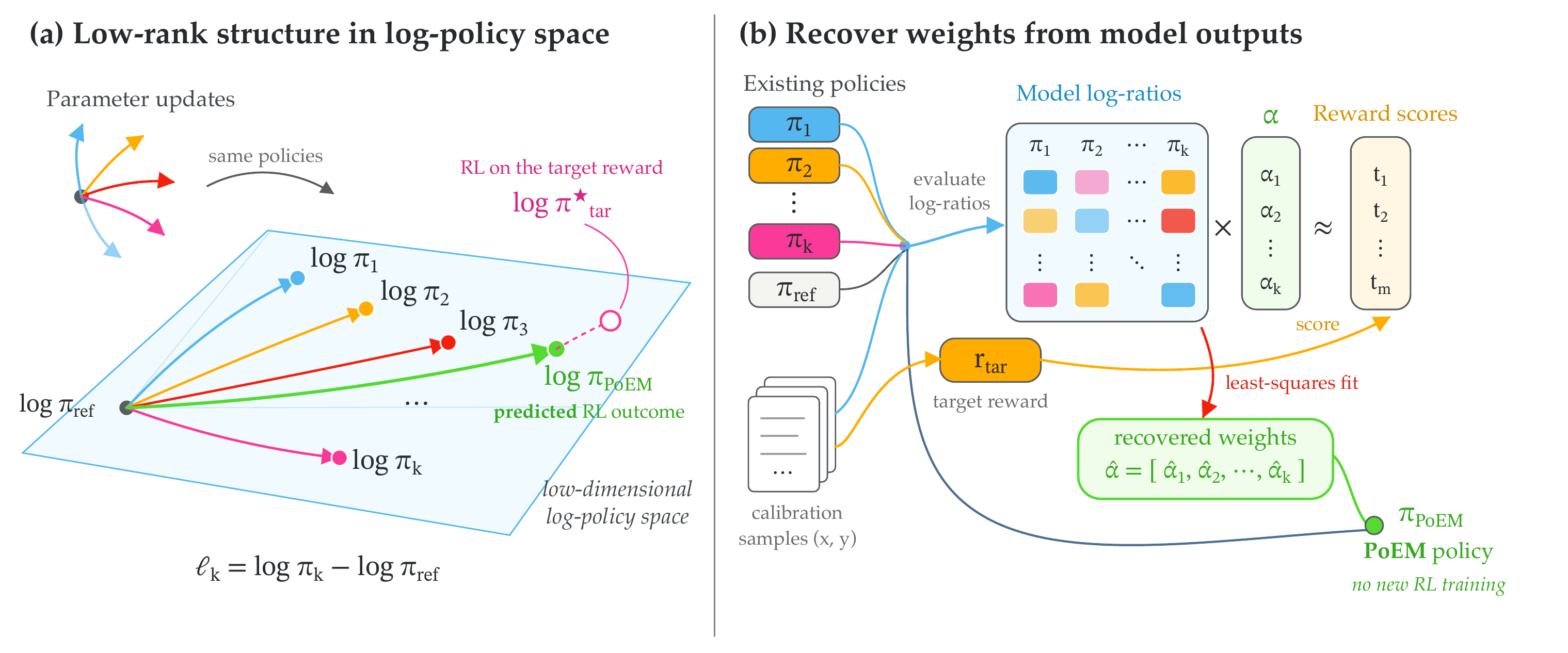}
    \caption{\textbf{Overview of \ours.} (a)~Policies $\pi_1,\dots,\pi_k$ post-trained from a shared reference $\pi_{\mathrm{ref}}$ on different rewards have nearly orthogonal parameter updates, yet their policy log-ratios $\phi_k=\log\pi_k-\log\pi_{\mathrm{ref}}$ span a low-dimensional space (measured in Fig.~\ref{fig:rank-grpo}). The RL outcome $\pi^\star$ on a target reward can lie close to this space even when that reward is not a combination of the basis rewards, so a composition $\hat\pi$ of the basis approximates it. (b)~\ours evaluates the basis log-ratios on shared calibration samples $(x,y)$, centers them within each prompt, and fits weights $\hat{\bm{\alpha}}$ by least squares so that $\widetilde\Phi\hat{\bm{\alpha}}$ matches the centered scores $\widetilde{\bm{r}}_{\mathrm{tar}}$ of the target reward $r_{\mathrm{tar}}$ on the same samples. The \ours policy $\hat\pi$ composes the basis with these weights at inference time, with no new RL training.}
    \label{fig:overview}
\end{figure}

What if we want to simulate a completely new reward? The new reward rarely decomposes linearly into a small fixed basis. Despite this, we find that the policies learned by single-reward adapters often span a much smaller behavioral space than the reward geometry suggests. Even when the adapter weight updates are nearly orthogonal, the matrix of log-ratios against the base model~\citep{rafailov2023direct} spans far fewer effective directions. A new optimal policy can therefore lie inside this subspace even when its reward is not a linear combination of the basis rewards. The strong linearity assumption on rewards is replaced by a much weaker geometric assumption on log-policies.

Inspired by these observations, we propose \ours (Product of Experts Mixing), a framework for predicting RL outcomes for new rewards from past RL trainings on a different set of rewards (Fig.~\ref{fig:overview}). We estimate composition weights from a small set of samples scored under the new reward, using either the basis reward outputs or the basis policy log-ratios. At inference time, the policy we output is a weighted log mixture of the base model and the previously obtained policies~\citep{liu2021dexperts,li2023contrastive,liu2024tuning,liu2024decoding}. No parameters are updated, and no additional RL run is launched.

We evaluate \ours across text and image modalities. On bases of $20$ adapters trained with GRPO or DPO on programmatic text rewards on \texttt{Qwen3-0.6B}, generations from \ours recover most of the reward gain of a directly trained RL policy on composite rewards, with an error close to the difference between two RL runs. In the general setting, a coverage score ranks in advance which held-out rewards \ours can reach. On a basis of ten diverse public reward models adapted with PPO, generations from \ours are closer to the directly trained RL policy than the leading single expert on 9 of 10 held-out rewards. On image generation with $13$ adapters trained with DDPO on \texttt{Stable Diffusion v1.4}, \ours approximates a held-out RL adapter from the remaining basis. Our contributions are as follows:
\begin{itemize}[topsep=0pt, parsep=0pt, itemsep=0pt, partopsep=0pt]
    \item We propose \ours, an inference-time method that approximates the RL outcome on a new reward by composing existing single-reward adapters, with no further training.
    \item We give a regression recipe for recovering composition weights, using either basis reward scores or basis log-ratios.
    \item We find a policy space rank gap: near-orthogonal adapters span far fewer directions in log-likelihood space than in parameter space.
    \item We validate \ours across language and image, with comparisons to best-of-$N$, and other decoding-time methods.
\end{itemize}

\section{Predicting RL Outcomes by Policy Composition}
\label{sec:method}

In this section, we propose \ours to predict the policy that RL training on a new reward would produce, as formalized in Section~\ref{sec:method-setup}. We first consider the case in which the target reward is a linear combination of the rewards the basis policies were trained on~\ref{sec:method-poe}, and instantiate the resulting composition for autoregressive and diffusion models~\ref{sec:method-instantiations}. We then explain how to recover the composition weights from the basis rewards or from the basis policies alone~\ref{sec:method-regress}, and show how to check whether the basis policies cover the new reward~\ref{sec:method-coverage}.

\subsection{Problem setup}
\label{sec:method-setup}
Let $\pi_{\mathrm{ref}}$ be a reference model and let $\mathcal{B}=\{\pi_1,\ldots,\pi_n\}$ be a basis of policies obtained by post-training the same reference model on different rewards. Basis policy $\pi_k$ is trained on reward $r_k$ using the KL regularized objective
\begin{equation}
J_{r_k}(\pi) = \mathbb{E}_{x\sim\mathcal{D}} \left[ \mathbb{E}_{y\sim\pi(\cdot\mid x)}[r_k(x,y)] - \beta\,\mathrm{KL}\!\left( \pi(\cdot\mid x)\,\|\,\pi_{\mathrm{ref}}(\cdot\mid x) \right) \right],
\label{eq:rlhf-obj}
\end{equation}
with a common reference policy and KL coefficient $\beta>0$. Given a new \emph{target} reward $r_{\mathrm{tar}}$, our goal is to approximate the policy $\pi^*_{\mathrm{tar}}\in\arg\max_\pi J_{r_{\mathrm{tar}}}(\pi)$ without training.

\subsection{Exact composition for linear rewards}
\label{sec:method-poe}

We first study the idealized case in which the new reward is a linear combination of the basis rewards,
\begin{equation}
r_{\mathrm{tar}}(x,y)=\sum_{k=1}^{n}\alpha_k r_k(x,y).
\label{eq:linear-reward}
\end{equation}
For an unrestricted policy class, the optimizer of Eq.~\eqref{eq:rlhf-obj} for any reward $r$ is the exponentially tilted reference policy \citep{peng2019advantage,korbak2022rl,rafailov2023direct},
\begin{equation}
\pi_r^*(y\mid x) = \frac{1}{Z_r(x)}\, \pi_{\mathrm{ref}}(y\mid x) \exp\!\left(\frac{r(x,y)}{\beta}\right), \qquad Z_r(x) = \mathbb{E}_{y\sim\pi_{\mathrm{ref}}(\cdot\mid x)} \left[\exp\!\left(\frac{r(x,y)}{\beta}\right)\right].
\label{eq:boltz}
\end{equation}
If every basis policy is the exact optimizer for its nominal reward, $\pi_k=\pi^*_{r_k}$, substituting Eq.~\eqref{eq:linear-reward} into Eq.~\eqref{eq:boltz} yields
\begin{equation}
\pi^*_{\mathrm{tar}}(y\mid x) \propto \pi_{\mathrm{ref}}(y\mid x) \prod_{k=1}^{n} \left( \frac{\pi_k(y\mid x)}{\pi_{\mathrm{ref}}(y\mid x)} \right)^{\alpha_k} = \pi_{\mathrm{ref}}(y\mid x)^{1-\sum_k\alpha_k} \prod_{k=1}^{n}\pi_k(y\mid x)^{\alpha_k}.
\label{eq:poe}
\end{equation}
Thus, the target RL solution is a product of experts in policy space. Specifically, we can rewrite this in terms of \emph{policy log-ratio} for each basis policy, or how much it has moved away from the reference policy in terms of probability on each sequence
\begin{equation}
\phi_k(x,y) = \log \pi_k(y\mid x) - \log \pi_{\mathrm{ref}}(y\mid x),
\label{eq:logratio-feature}
\end{equation}
so that $\pi^*_{\mathrm{tar}}\propto\pi_{\mathrm{ref}}\exp(\sum_k\alpha_k\phi_k)$. When $\sum_k\alpha_k=1$, the explicit reference term vanishes.

Equation~\eqref{eq:poe} is exact under the idealized assumptions above. In practice, finite capacity models and imperfect optimization mean that a trained basis policy might not reach $\pi^*_{r_k}$. Our method therefore composes the \emph{implicit rewards actually represented by the trained policies}, rather than assuming that every basis perfectly maximizes its corresponding reward. We make this distinction explicit in Section~\ref{sec:method-regress}.

\subsection{\ours: Product-of-Experts Mixing}
\label{sec:method-instantiations}

\subsubsection{\ours for Autoregressive language models}
\label{sec:method-llm}

For a language model, $x$ is a prompt and $y=(y_1,\ldots,y_T)$ is a response. Equation~\eqref{eq:poe} defines a distribution over complete responses, but sampling from that distribution exactly requires normalizers over the full response space. We obtain a practical decoder by applying the same log-ratio composition locally at each prefix $h_t=(x,y_{<t})$, using the token-level log-ratios $\phi_k(h_t,y_t)=\log\pi_k(y_t\mid h_t)-\log\pi_{\mathrm{ref}}(y_t\mid h_t)$, which sum to the sequence-level ones, $\phi_k(x,y)=\sum_t\phi_k(h_t,y_t)$:
\begin{align}
\log \pi_{\methodm(\alpha)}(y_t\mid h_t) &\doteq \log \pi_{\mathrm{ref}}(y_t\mid h_t) + \sum_{k=1}^{n}\alpha_k\,\phi_k(h_t,y_t),
\label{eq:poe-decode}
\end{align}
where $\doteq$ denotes equality up to the token-level normalizing constant. This decoder is inexpensive, as it requires only the next-token logits of the reference and basis policies, and it performs no parameter updates.

This is not generally identical to the globally normalized sequence distribution in Eq.~\eqref{eq:poe}, as the local normalization conditions on the prefix at every step. The two are similar in special cases in which these continuation normalizers do not depend on the generated path. In general, we treat Eq.~\eqref{eq:poe-decode} as the autoregressive approximation used by \ours. Note that sampling strategy is similar to the decoding strategy of prior work in multi-objective alignment~\citep{Shi2024MOD}.

\subsubsection{\ours for Diffusion models}
\label{sec:method-diffusion}
We now instantiate the framework for diffusion models~\citep{ho2020denoising, pmlr-v37-sohl-dickstein15}.

\paragraph{Background.} Before we proceed, it is useful to provide some background on diffusion models. The goal in diffusion modeling is to sample from some distribution $p_0$. During training, we are given samples $X_0$ from $p_0$, we corrupt them by adding noise forming random variables $X_t = X_0 + \sigma(t) Z, \ Z \sim \mathcal N(0, I)$, for different noise levels $\sigma(t)$, and we train the model to reconstruct $X_0$ from $X_t$ with an $l_2$ loss. For a fixed noise level $t$, the optimal $l_2$ denoiser is the conditional expectation $\mathbb E[X_0 | X_t = \cdot, t]$. A network $h_{\theta^*}$ is hence trained to approximate this object.

Using the notation above, and for simplicity assuming $\sum_k \alpha_k=1$, Eq. \eqref{eq:poe} reads $\pi_{\mathrm{tar}}^*(x_0) \propto \prod_k \pi_k(x_0)^{\alpha_k}$. The question here becomes; is it possible to connect the objects being trained, i.e. the conditional expectations, via the equation above? Since we assumed $\sum_k \alpha_k = 1$, the posterior distribution can be composed as:
\[
\pi_{\mathrm{tar}}^*(x_0\mid X_t=x_t) = \frac{1}{Z(x_t)} \prod_k \pi_k(x_0\mid X_t=x_t)^{\alpha_k}, \qquad Z(x_t) = \int \prod_k \pi_k(x_0\mid X_t=x_t)^{\alpha_k}\, dx_0 .
\]
By taking logarithms and the gradient with respect to $x_t$ in both sides, we have that
\begin{equation}
    \nabla \log \pi_{\mathrm{tar}}^*(x_0\mid X_t=x_t) = \sum_{k} \alpha_k \nabla \log \pi_k(x_0\mid X_t=x_t) - \nabla \log Z(x_t).
    \label{eq:complicated_thing_with_cond_scores}
\end{equation}
We now have to work with these conditional scores. An application of Bayes formula gives $\nabla \log \pi_{i}(x_0\mid X_t=x_t) = \underbrace{\nabla \log \pi_{i}(x_t\mid X_0=x_0)}_{A} - \nabla \log \pi_{i}(x_t)$. The important observation is that with $x_0$ fixed, the first likelihood term is Gaussian independent of the policy $\pi_i$. Hence, Eq. \eqref{eq:complicated_thing_with_cond_scores} becomes $A - \nabla \log \pi^*_{\mathrm{tar}}(x_t) = \sum_k \alpha_k (A - \nabla \log \pi_k(x_t)) - \nabla \log Z(x_t) \iff \nabla \log \pi^*_{\mathrm{tar}}(x_t) = \sum_k \alpha_k \nabla \log \pi_k(x_t) + \nabla \log Z(x_t)$. For the last step of this calculation, we are invoking a powerful statistical tool, called Tweedie's Formula~\citep{efron2011tweedie, tweedie1957statistical}, $\mathbb E[X_0 \mid X_t=x_t] = x_t + \sigma^2(t)\nabla \log \pi(x_t)$, that connects the gradient of the log-likelihood (also known as the score) with the conditional expectation the model is trained to estimate. The final expression becomes:
\begin{equation}
    \mathbb E_{r_{\mathrm{tar}}}[X_0 | X_t=x_t] = \sum_k \alpha_k \mathbb E_{r_{k}}[X_0 | X_t=x_t] + \sigma^2(t)\, \nabla \log Z(x_t).
    \label{eq:diffusion_combine}
\end{equation}
\begin{remark}
    Simply put, Equation \eqref{eq:diffusion_combine} states that the denoiser that we will get by adapting a diffusion model with a new reward $r_{\mathrm{tar}}$ is a linear combination of the existing denoisers that have been adapted to previous rewards, plus a correction term $\sigma^2(t)\nabla \log Z(x_t)$, as long as the new reward can be expressed as a linear combination of those rewards. By H\"older's inequality, $Z(x_t) \le 1$, with equality if and only if all posteriors $\pi_k(\cdot \mid X_t = x_t)$ coincide. The correction vanishes as $\sigma(t)\to 0$, where all posteriors concentrate at $x_t$, and it is exactly zero when, e.g., the basis policies are Gaussians with a shared covariance, since then $Z(x_t)$ does not depend on $x_t$. In general, however, it is non-zero, so dropping it and linearly combining the predictions of the existing networks is an approximation to the target denoiser rather than an exact identity. This approximation still allows us to skip RL-training altogether for the new reward.
\end{remark}

\subsection{Recovering composition weights}
\label{sec:method-regress}

The composition rules above require coefficients $\alpha$. These may be specified directly by the user, but our primary setting is one in which only a new reward function is given. In what follows, we present an algorithm that estimates these coefficients from a small calibration data pool and access to the new reward function. We present the algorithm for the autoregressive models case, but it naturally extends to the diffusion modeling paradigm.

\paragraph{The calibration set.} We fit the weights on a small calibration set of prompts $x_p$, each with $M_p$ responses $y_{p,m}$. The responses can be sampled from the reference model or, if the experts are far from it, from the experts. A reward term that depends only on the prompt does not change the optimal policy, since $Z_r(x)$ in Eq.~\eqref{eq:boltz} absorbs it. As in DPO~\citep{rafailov2023direct}, we remove such terms by comparing responses to the same prompt. We subtract from each reward and log-ratio its mean over the prompt's responses, and refer to these centered values with a tilde.

\paragraph{Algorithm when the basis rewards are available.} We score every response using the new reward and the basis rewards, stack the centered target scores into $\widetilde{\bm{r}}_{\mathrm{tar}}\in\mathbb{R}^{N}$, where $N=\sum_p M_p$, and the centered basis rewards into $\widetilde R\in\mathbb{R}^{N\times n}$, with columns $\widetilde r_k$. We estimate the reward space weights by ridge regression,
\begin{equation}
\bm{\alpha}^{\mathrm{R}} = \arg\min_{\alpha\in\mathbb{R}^{n}} \left\|\widetilde{\bm{r}}_{\mathrm{tar}}-\widetilde R\alpha\right\|_2^2 + \lambda\|\alpha\|_2^2.
\label{eq:reward-regression}
\end{equation}
When Eq.~\eqref{eq:linear-reward} holds and the calibration matrix has sufficient rank, $\bm{\alpha}^{\mathrm{R}}$ recovers the true weights. Outside that setting, it gives the best regularized linear approximation of the new reward by basis rewards on the calibration distribution.

\paragraph{Algorithm when only the basis policies are available.} To run the algorithm above, we require access not only to the basis policies but also to the reward policies that produced them. If those are not available, they can be estimated instead. In particular, our observation is that each trained policy also defines an implicit reward. Specifically, this is related to how much more probability the new policy assigns to a data point in comparison to the base policy. Rearranging Eq.~\eqref{eq:boltz} gives
\begin{equation}
r(x,y) = \beta \left[ \log \pi_r^*(y\mid x) - \log \pi_{\mathrm{ref}}(y\mid x) \right] + \beta\log Z_r(x).
\label{eq:implicit-reward}
\end{equation}
Applied to a trained basis policy, this identity makes $\beta\phi_k$ the reward for which $\pi_k$ is exactly KL-optimal, up to the final, prompt-only term, which centering removes. We can therefore use the policy log-ratios of Eq.~\eqref{eq:logratio-feature} as regression features. Let $\widetilde\Phi\in\mathbb{R}^{N\times n}$ be the feature matrix with columns $\widetilde\phi_k$, and estimate the policy space coefficients by
\begin{equation}
\widehat{\bm{\alpha}} = \arg\min_{\alpha\in\mathbb{R}^{n}} \left\|\widetilde{\bm{r}}_{\mathrm{tar}}-\beta\widetilde\Phi\alpha\right\|_2^2 + \lambda\|\alpha\|_2^2.
\label{eq:implicit-regression}
\end{equation}
The scale $\beta$ can be absorbed into $\alpha$ when the effective KL coefficient of the basis is unknown. Unlike $\bm{\alpha}^{\mathrm{R}}$, which describes how we can recover $r_\mathbf{tar}$ from rewards, $\widehat{\bm{\alpha}}$ describes how we can express it in terms of the \emph{directions of basis policies}. These two are similar at the exact KL regularized optimum with a shared $\beta$, but are different when the basis policies are imperfectly optimized or have different effective strengths.

\subsection{Policy space coverage beyond linear rewards}
\label{sec:method-coverage}

So far, we have assumed that the new reward to be estimated can be expressed as a linear combination of the existing rewards. This exact reward composition is a sufficient condition for Eq.~\eqref{eq:poe} to hold, but it is not the only regime in which \ours can be useful. Specifically, the \ours decoder is the span of the log-ratios $\phi_1,\ldots,\phi_n$, regardless of any assumptions about the target reward. If the target policy that we would obtain through RL lies in that span, \ours can approximate it. We propose a coverage score that measures whether this condition holds or not. After fitting $\widehat{\bm{\alpha}}$ on the fitting subset of the calibration set as explained above, we evaluate
\begin{equation}
\operatorname{Cov}_{\mathcal{B}}(r_{\mathrm{tar}}) = 1- \frac{ \left\| \widetilde{\bm{r}}_{\mathrm{tar}}^{\mathrm{val}} - \beta \widetilde\Phi^{\mathrm{val}}\widehat{\bm{\alpha}} \right\|_2^2 }{ \left\| \widetilde{\bm{r}}_{\mathrm{tar}}^{\mathrm{val}} \right\|_2^2 }.
\label{eq:coverage-score}
\end{equation}

This is the explained variance or $R^2$ of the target reward against the policy log-ratio features. For best results, the calibration distribution should consist of samples that we are interested in evaluating the approximated policy in. Naturally, if the target reward can be represented as a linear combination of basis policies the coverage score will be high. However, this metric can also be high for target rewards that are not linear combinations of the existing ones. We can leverage \ours when coverage is high, and abstain or train a new basis policy when it is low.

This result formalizes the case in which \ours can generalize beyond literal reward combinations, as it does not rely on the reward linearity assumption. App.~\ref{app:coverage-bound} bounds the gap to $\pi^*_{\mathrm{tar}}$ when the target reward is uniformly close to the span of the log-ratios. This coverage metric is relevant to the role of task coverage in successor feature transfer \citep{barreto2016successor}, but here the reusable features are given by previous post-training runs rather than specified in advance.

\paragraph{Method summary.} Given a target reward $r_{\mathrm{tar}}$, \ours 1) scores a small calibration set, 2) obtains coefficients using reward space regression in Eq.~\eqref{eq:reward-regression} or policy space regression in Eq.~\eqref{eq:implicit-regression}, 3) checks the held-out coverage score in Eq.~\eqref{eq:coverage-score}, and 4) composes the basis at inference time using Eq.~\eqref{eq:poe-decode} for language models or Eq.~\eqref{eq:diffusion_combine} for diffusion models. No model parameters are updated and no additional RL run is required.

\begin{figure}[t]
\centering
\includegraphics[width=\linewidth]{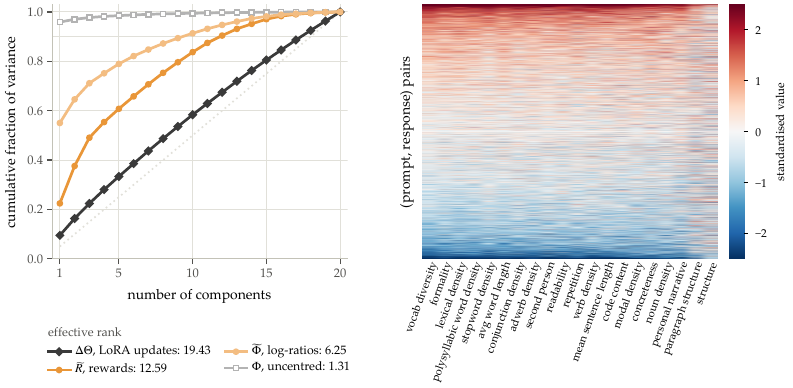}
\caption{\textbf{Policies trained on different rewards vary along fewer directions than the rewards themselves.} Twenty experts trained with GRPO, one per programmatic reward, evaluated on base-model responses. (a) Cumulative variance of the experts' weight updates ($\Delta\Theta$), rewards ($\widetilde R$) and log-ratios ($\widetilde\Phi$). The weight updates are close to full rank, while the log-ratios use about half as many effective directions as the rewards. (b) The log-ratio matrix $\widetilde\Phi$, rows sorted by its first principal component: nearly all experts move together along one shared direction. App.~\ref{app:rank-figs} shows the same for policies trained on public reward models.}
\label{fig:rank-grpo}
\end{figure}

\section{Experimental Setup}
\label{sec:setup}

We test \ours as a predictor of RL. For each target reward $r_{\mathrm{tar}}$ we train one policy $\pi_{\mathrm{tar}}$ with RL and we use it as the oracle that \ours tries to approximate. We run three different types of experiments. In the first type, the rewards we are testing are exact linear combinations of the basis rewards. The second type measures whether the experts' log-ratios vary along fewer directions than their rewards (\emph{geometry}). The third predicts RL on \emph{held-out rewards}, which are not weighted sums of the basis rewards. Section~\ref{sec:experiments} reports them in this order on all bases. App.~\ref{app:setup} lists the bases and the experiments run on each, and includes the full training and evaluation details.

\subsection{Basis training}
\label{sec:setup-bases}
\label{sec:setup-common}

We experiment with different types of reward functions for our basis across our language and diffusion experiments.

\paragraph{Programmatic rewards for language.} \pgrpo and \pdpo share 20 programmatic rewards. Each is a deterministic function of the response text, such as vocabulary diversity, formality, readability (App.~\ref{app:prog-setup}). Each expert is a rank-64 LoRA adapter on \texttt{Qwen3-0.6B}, trained on one reward from a shared initialization. \pgrpo trains its experts on-policy with GRPO~\citep{shao2024deepseekmath} for one epoch, with a KL penalty to $\pi_{\mathrm{ref}}$ in the loss. \pdpo trains them offline with DPO, on pairs of fixed responses generated by the reference model, and ranked by the reward. The two bases share the rewards and differ in how their experts were trained.

\paragraph{Reward models for language.} The next three bases are trained with PPO on reward models (RMs), with the KL penalty in the reward. \rmrb uses four RMs that rank high on RewardBench~\citep{lambert2025rewardbench}. \rmhh uses the helpfulness and harmlessness heads of ArmoRM~\citep{wang2024interpretable}. \rmpub uses ten diverse public RMs released by different groups, and each of its experts passes a check for reward hacking (App.~\ref{app:rm-setup}).

\paragraph{Image reward functions.} \sdddpo has 13 LoRA experts on Stable Diffusion v1.4~\citep{rombach2022high}, trained with DDPO~\citep{black2023training}. Ten are trained on image statistics, such as compressibility, colorfulness and sharpness, and three on image RMs: aesthetic score~\citep{murray2012ava}, PickScore~\citep{pickscore} and CLIP score~\citep{radford2021learning}.

\subsection{Datasets and target policies}
All language model experts and targets are trained on UltraChat prompts~\citep{ding2023enhancing}. For \rmhh and \rmpub, about half of the training prompts are harmful requests from PKU-SafeRLHF~\citep{ji2024pku}. Evaluation prompts come from the same datasets and are held out from every training set. The calibration responses that weights and coverage are fit on are sampled on held-out UltraChat prompts for \pgrpo and \pdpo and on training prompts for the RM bases. For a combined reward, $\pi_{\mathrm{tar}}$ is trained on $r_{\mathrm{tar}}$ with the method and recipe of its basis. On the programmatic bases, the combined rewards mix 2 to 16 $z$-scored basis rewards, with weights ranging from one dominant reward to uniform. For a held-out reward, $\pi_{\mathrm{tar}}$ is the held-out expert itself.

\subsection{Evaluation and metrics}
\label{sec:setup-metrics}

\paragraph{Decoding and aggregation.} We decode from \ours with Eq.~\eqref{eq:poe-decode}, which requires loading $k$ experts at a time. All language model policies, including \ours, decode greedily on held-out prompts: up to 96 new tokens on the programmatic bases, and up to 512 on the RM bases, the length their experts were trained with. Each decoded policy is scored against the $\pi_{\mathrm{ref}}$ and $\pi_{\mathrm{tar}}$ decoded in the same run, and we report medians over targets (means over the 13 rewards on \sdddpo), with 95\% bootstrap intervals on the programmatic bases. App.~\ref{app:eval-setup} lists the prompt sets and the remaining protocol details.

\paragraph{Composition strength.} We optionally multiply the basis contribution in Eq.~\eqref{eq:poe-decode} by a scalar $\gamma>0$,
\begin{equation}
\log \pi_{\methodm(\alpha;\gamma)}(y_t\mid h_t) \doteq \log \pi_{\mathrm{ref}}(y_t\mid h_t) + \gamma\sum_k\alpha_k\,\phi_k(h_t,y_t).
\label{eq:poe-decode-gamma}
\end{equation}
At the sequence level, scaling by $\gamma$ is equivalent to scaling the composed implicit reward, or to replacing $\beta$ by $\beta/\gamma$. Thus $\gamma$ is a test-time adjustment to the KL from the reference model. This is useful because when the active basis directions are weakly correlated, the norm of their average can shrink with the number of components. In this regime, a larger $\gamma$ compensates for cancellation. The shrinkage would be correct if every policy optimized Eq.~\eqref{eq:rlhf-obj} on its reward as given, since Eq.~\eqref{eq:poe} then holds at $\gamma=1$. However, common RL recipes discard the reward's scale. For instance, GRPO divides each reward by its standard deviation among the responses to a prompt, and DPO keeps only which response of a pair ranks higher, while PPO with the KL penalty in the reward, which trains the RM bases, keeps the scale. A policy trained with GRPO or DPO receives a training update of the same size whatever its reward, and this holds for $\pi_{\mathrm{tar}}$ even though its reward combines several. The weighted average of the experts' log-ratios is not rescaled in this way, so at $\gamma=1$ it is lower in magnitude than $\pi_{\mathrm{tar}}$'s log-ratio. We set this compensation without training on $r_{\mathrm{tar}}$ by rescaling the composed log-ratio to the typical size of a single expert's,
\begin{equation}
\gamma_{\mathrm{geom}} = \Big(\frac{\sum_k|\alpha_k|\,G_{kk}}{\alpha^{\top}G\,\alpha}\Big)^{1/2},
\label{eq:gamma-geom}
\end{equation}
where $G$ is the Gram matrix of the experts' log-ratios on reference model responses, centered within each prompt, and $\alpha$ is rescaled so that $\sum_k|\alpha_k|=1$. It equals one for identical experts and $1/\|\alpha\|_2$ for uncorrelated experts of equal size. The correction is exact in an idealized case. If each policy optimizes Eq.~\eqref{eq:rlhf-obj} on its reward divided by the reward's standard deviation within a prompt, and the basis rewards share one standard deviation $s$, then $r_{\mathrm{tar}}$ divided by its own standard deviation equals $\gamma_{\mathrm{geom}}\sum_k\alpha_k r_k/s$. An alternative, the drift-matched $\gamma^\star$, picks the strength at which the prediction's per-token log-likelihood under $\pi_{\mathrm{ref}}$ has dropped by the $|\alpha|$-weighted average of the drops of the experts it composes. More details are in Appendix~\ref{app:gamma}.

\paragraph{Reward recovery and distance to the target policy.} We score a policy $\pi$ by its mean target reward $r(\pi)$ and report the share of $\pi_{\mathrm{tar}}$'s reward gain that it recovers,
\begin{equation}
\mathrm{rec}(\pi)=\frac{r(\pi)-r(\pi_{\mathrm{ref}})}{r(\pi_{\mathrm{tar}})-r(\pi_{\mathrm{ref}})},
\label{eq:recovery}
\end{equation}
so that $\pi_{\mathrm{ref}}$ scores 0 and $\pi_{\mathrm{tar}}$ scores 1. The reward error is $|1-\mathrm{rec}(\pi)|$. A policy can reach the target reward without behaving like $\pi_{\mathrm{tar}}$, so we also measure the per-token $\mathrm{KL}(\pi_{\mathrm{tar}}\,\|\,\pi)$ along $\pi_{\mathrm{tar}}$'s own greedy responses. The relative KL divides it by $\mathrm{KL}(\pi_{\mathrm{tar}}\,\|\,\pi_{\mathrm{ref}})$, so a relative KL below 1 means the prediction is closer to $\pi_{\mathrm{tar}}$ than the reference model is.

\paragraph{Weights, strengths and baselines.} For combined rewards on the programmatic bases, $\hat{\bm{\alpha}}$ is fit on reference model responses. For held-out rewards and on the RM bases, it comes from non-negative least squares on responses sampled from the experts, without the held-out expert's responses. We decode at $\gamma=1$, on the programmatic bases also at $\gamma_{\mathrm{geom}}$, and for held-out rewards also at the drift-matched $\gamma^\star$. None of these uses $\pi_{\mathrm{tar}}$, and the $\gamma$ tuned on $\pi_{\mathrm{tar}}$ serves only as a reference. The baselines are best-of-$N$, the top expert and, on \rmpub, uniform weights.

\paragraph{Coverage.} Before decoding for a held-out reward, we compute the coverage of Eq.~\eqref{eq:coverage-score}, which is equivalent to the held-out $R^2$ of the target reward regressed on the experts' log-ratios, on a calibration set of responses sampled from the experts. A reward is \emph{covered} when its coverage is higher than a threshold (threshold = 0.3 in our experiments). On \rmpub we use \emph{within expert coverage}, which includes on-policy responses for the basis models. All its values fall below 0.3, so we split \rmpub at its median coverage instead (App.~\ref{app:eval-setup}).

\paragraph{Geometry.} The effective rank of a matrix is $\exp(-\sum_i p_i\log p_i)$, where $p_i$ is the share of variance along its $i$-th principal direction. We compute it on 2{,}400 reference model responses to 600 prompts, for the reward matrix $\widetilde R$ and the log-ratio matrix $\widetilde\Phi$ of Section~\ref{sec:method-regress}, each centered within prompt and standardized per column, and for the Gram matrix of the experts' weight updates $\Delta\Theta$.

\section{Results}
\label{sec:experiments}

We first test \ours on combined rewards, where our theory applies directly, then study the geometry of the trained experts and test \ours on held-out rewards and on image diffusion.

\subsection{Combined rewards}
\label{sec:dpo_synth}
\label{sec:linear-setting}
\label{sec:ppo_rm}

When $r_{\mathrm{tar}}$ is a weighted sum of the basis rewards, Eq.~\eqref{eq:poe} makes the RL optimum exactly the product of the experts with the true weights $\bm{\alpha}^*$ that define $r_{\mathrm{tar}}$. In practice this holds only approximately since trained experts are not exact optima, and the decoder normalizes at every token rather than over whole responses (App.~\ref{app:decoder-gap}). We therefore ask whether \ours matches the reward of the target policy, and if it is distributionally close to the target policy. Table~\ref{tab:combined-summary} and Fig.~\ref{fig:combined} answer both on the four language model bases, and Fig.~\ref{fig:neural-composites} shows the RM bases in detail.

\begin{table}[t]
\centering
\caption{\textbf{On combined rewards, \ours with the true weights is close to a second RL run.} Reward error is 0 when a method matches $\pi_{\mathrm{tar}}$'s reward gain and 1 when it is as far off as the reference model. A second RL run with another seed shows how much RL itself varies. The last two columns measure if \ours is distributionally close to $\pi_{\mathrm{tar}}$: its KL to $\pi_{\mathrm{tar}}$ as a fraction of the reference model's, and how often it is the closer of the two. We report medians over combined rewards, with \ours at $\gamma_{\mathrm{geom}}$ on the programmatic bases and $\gamma=1$ on the RM bases.}
\label{tab:combined-summary}
\small
\setlength{\tabcolsep}{3.6pt}
\begin{tabular}{@{}lcccccccc@{}}
\toprule
 & & \multicolumn{5}{c}{Reward error $\downarrow$} & \multicolumn{2}{c}{Distance of \ours$(\bm{\alpha}^*)$} \\
\cmidrule(lr){3-7}\cmidrule(l){8-9}
Basis & Targets & \ours$(\hat{\bm{\alpha}})$ & \ours$(\bm{\alpha}^*)$ & top expert & best-of-16 & 2nd RL run & relative KL & closer \\
\midrule
\pgrpo & 32 & 0.28 & 0.19 & 0.23 & 0.37 & 0.12 & 0.24 & 32/32 \\
\pdpo & 32 & 0.19 & 0.21 & 0.32 & 0.40 & 0.17 & 1.05 & 13/32 \\
\rmrb & 8 & 0.05 & 0.08 & 0.13 & 0.24 & -- & 0.29 & 8/8 \\
\rmhh & 3 & 0.07 & 0.22 & 0.07 & 0.22 & -- & 0.58 & 3/3 \\
\bottomrule
\end{tabular}
\end{table}

\begin{figure}[t]
\centering
\includegraphics[width=\linewidth]{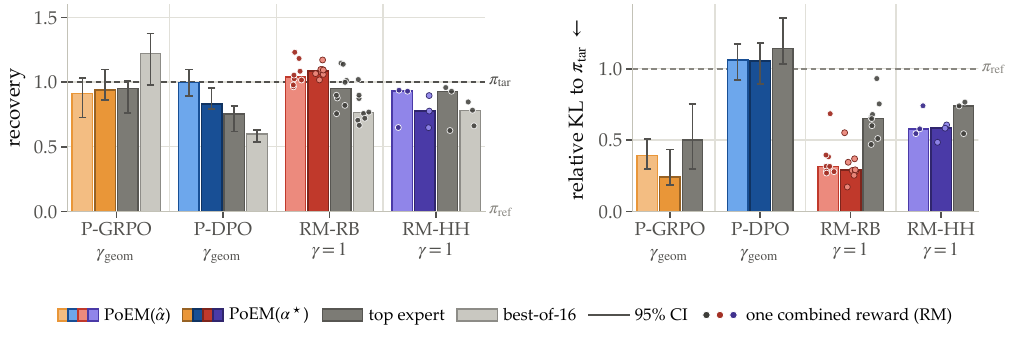}
\caption{\textbf{On combined rewards, \ours recovers most of RL's reward gain, and is closer to the target policy.} Left: $\pi_{\mathrm{tar}}$'s reward gain, where 1 matches $\pi_{\mathrm{tar}}$ and 0 the reference model. Right: KL to $\pi_{\mathrm{tar}}$ relative to that of the reference model. Bars are medians over combined rewards, with 95\% bootstrap intervals on the programmatic bases and single combined rewards as dots on the RM bases. \ours uses fitted (light) or true (dark) weights, at $\gamma_{\mathrm{geom}}$ on the programmatic bases and $\gamma=1$ on the RM bases.}
\label{fig:combined}
\end{figure}

\paragraph{\ours recovers RL's reward gain.} On the RM bases, \ours at $\gamma=1$ already recovers 0.78 to 1.08 of $\pi_{\mathrm{tar}}$'s reward gain. On the programmatic bases, $\gamma=1$ falls short, more so as more rewards are combined (App.~\ref{app:composite-figs}). $\gamma_{\mathrm{geom}}$ estimates how much policy log-ratios shrink in comparison to the experts, and after applying it, recovery increases to 0.83 to 1.00. We also find that \ours recovery is close to RL's own variation. Specifically, a second RL run with another seed achieves a 0.12 to 0.17 recovery error. \ours has a recovery error of 0.19 to 0.28 on the programmatic bases, and on the nine \pgrpo combined rewards that have a second run it matches that run.

\paragraph{\ours is distributionally close to the target policy.} Obtaining RL's reward does not yet mean we have predicted $\pi_{\mathrm{tar}}$. On \pgrpo and the RM bases, \ours does both: it is closer to $\pi_{\mathrm{tar}}$ than the reference model on all combined rewards. \pdpo, whose experts are trained offline, is the exception, as \ours matches the reward at $\gamma_{\mathrm{geom}}$ but not the distance, and at $\gamma=1$ the reverse.

\ours is closer to $\pi_{\mathrm{tar}}$ than the top expert on every RM combined reward, and unlike best-of-$N$ does not need to query $r_{\mathrm{tar}}$ at decoding. App.~\ref{app:composite-figs} includes more baseline comparisons. On combined rewards, \ours predicts RL's reward about as well as a second RL run, and with on-policy experts it also predicts the policy.

\subsection{Beyond the linear case: policy space geometry of post-trained models}
\label{sec:geometry}
Theoretically, we relied on the strong assumption that the new reward is exactly a linear combination of the basis rewards, $r_{\mathrm{tar}} = \sum_k \alpha_k\,r_k$. We now turn our interest to exploring what happens when this assumption is violated. We present this section for the instantiation of our framework for Autoregressive models, but similar findings extend to the case of image diffusion models as we show experimentally in Section \ref{sec:exp:diffusion}.

For LLMs, the PoEM decoder of Eq.~(\ref{eq:poe-decode}) combines the basis policy \emph{log-ratios} $\log\pi_k - \log\pi_{\mathrm{ref}}$, not the basis rewards. A relevant question is therefore whether the optimal log-policy for $r_{\mathrm{tar}}$ lies in the subspace spanned by the basis log-ratios. A \textit{sufficient} condition for that to happen is when $r_{\mathrm{tar}}$ lies in the linear span of the basis rewards. We show here that this condition is \textbf{not} \textit{necessary}.

Specifically, we measure the dimensionality of that subspace directly on the matrix $\widetilde\Phi$ from Section~\ref{sec:method-regress}, instantiated on a basis of $n = 20$ adapters trained on \texttt{Qwen3-0.6B} for distinct programmatic rewards (vocabulary diversity, formality, stopword density, sentence length, and others). We make the surprising experimental finding that the (approximate) rank of this matrix is markedly smaller than $n$ (Fig.~\ref{fig:rank-grpo}). The 20 adapters use almost all of their parameter degrees of freedom. Their stacked weight updates $\Delta\theta$ are full rank, a median pairwise cosine $0.02$ between adapters, so they are nearly mutually orthogonal. By every parameter space measure, the basis is doing $n$ different things. In log-likelihood space it is not. Centered $\widetilde\Phi$ has effective rank $6.3$ out of $20$, three times smaller than in $\Delta\theta$ ($19.4$). \emph{Surprisingly, even when the rewards driving these adapters share no obvious linear relationship, their induced log-policies often span an approximately low-rank subspace}, and part of this shared structure is response length (App.~\ref{app:rank-figs}).

Because $\widetilde\Phi$ is approximately low-rank with effective dimension much smaller than $n$, the set of log-policies reachable by $\sum_k \alpha_k(\log\pi_k - \log\pi_{\mathrm{ref}})$ is, up to a prompt-only offset, a low-dimensional space that the basis covers densely. A new reward $r_{\mathrm{tar}}$ does not need to admit a closed-form decomposition $\sum_k \beta_k\,r_k$ for \ours to fit it. It suffices that the optimal log-policy for $r_{\mathrm{tar}}$ lies near the subspace spanned by the basis log-ratios. The strong linearity assumption on \emph{rewards} from Section~\ref{sec:method-poe} is therefore replaced by a much weaker geometric assumption on the optimal \emph{log-policy}. The same low-rank structure also conditions the regression of Section~\ref{sec:method-regress}, whose effective column dimension is the policy space rank rather than $n$, so a small calibration set suffices.

\subsection{Held-out rewards}
\label{sec:new-rewards}

The previous section suggests that \ours can approximate target policy of a given reward even when the reward is not a weighted sum of the basis rewards, as long as $\pi_{\mathrm{tar}}$'s log-ratio lies near the span of the experts' log-ratios. Coverage (Eq.~\eqref{eq:coverage-score}) checks this before any decoding and without $\pi_{\mathrm{tar}}$, by measuring how much of the held-out reward the experts' log-ratios explain. We hold out each expert of \pgrpo, \pdpo and \rmpub in turn and predict it from the others (Table~\ref{tab:heldout-summary}, Fig.~\ref{fig:heldout}).

\begin{table}[t]
\centering
\caption{\textbf{Coverage separates the held-out rewards \ours can approximate from those it cannot.} On every basis, covered rewards recover more of RL's reward gain than uncovered ones, and $\rho$, the rank correlation between coverage and reward error, is strongly negative. A reward counts as covered when its coverage is larger than a threshold. We find that covered rewards have higher recovery in comparison to rewards that are not covered, as well as lower relative KL.}
\label{tab:heldout-summary}
\small
\setlength{\tabcolsep}{4.5pt}
\begin{tabular}{@{}lccccccccc@{}}
\toprule
 & & & \multicolumn{3}{c}{Covered} & \multicolumn{3}{c}{Not covered} \\
\cmidrule(lr){4-6}\cmidrule(l){7-9}
Basis & Targets & $\rho$ & recovery & relative KL & closer & recovery & relative KL & closer \\
\midrule
\pgrpo & 20 & $-0.79$ & 0.55 & 0.47 & 9/9 & 0.22 & 1.04 & 5/11 \\
\pdpo & 20 & $-0.71$ & 0.69 & 1.28 & 3/9 & 0.29 & 2.01 & 0/11 \\
\rmpub & 10 & $-0.81$ & 0.60 & 0.64 & 4/5 & 0.43 & 0.99 & 3/5 \\
\bottomrule
\end{tabular}
\end{table}

\begin{figure}[t]
\centering
\includegraphics[width=\linewidth]{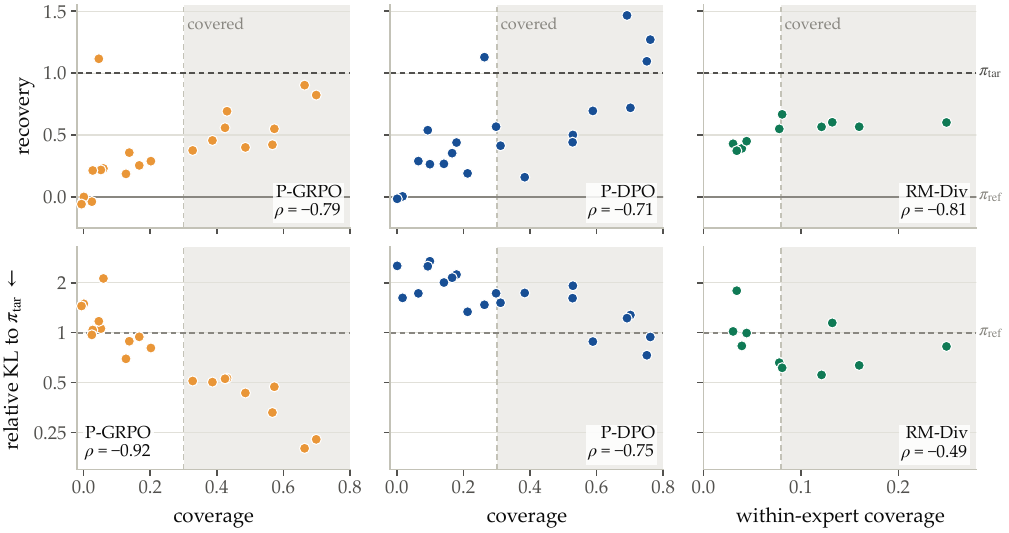}
\caption{\textbf{Coverage, computed before decoding, predicts both how much of a held-out reward \ours recovers and how close it gets to $\pi_{\mathrm{tar}}$.} Each point is one expert, held out and predicted from the others. Top: recovery of its reward. Bottom: KL to $\pi_{\mathrm{tar}}$ relative to that of the reference model, where values below 1 are closer. The shaded region is covered. We find that with higher coverage, the achieved recovery increases, and relative KL decreases. }
\label{fig:heldout}
\end{figure}

Coverage tells in advance which rewards \ours can reach. On all three bases it ranks the reward error ($\rho$ from $-0.71$ to $-0.81$), and covered rewards recover more of RL's gain than the others (0.55 to 0.69, against 0.22 to 0.43). When coverage is high, the direction in which RL moves the policy already lies in the span of the experts' directions; when it is low, that direction is missing. Held-out rewards remain harder than combined ones (reward error about 0.45 when covered, against 0.2), so coverage is best read as a warning of which predictions to distrust. It ranks rewards within a basis but is not a threshold across bases: on \rmrb, the held-out rewards have coverage near zero yet recover 0.56 to 0.83, because its reward models largely agree (App.~\ref{app:neural-results}).

\paragraph{With on-policy experts, covered predictions also resemble $\pi_{\mathrm{tar}}$.} On \pgrpo, all nine covered predictions are closer to $\pi_{\mathrm{tar}}$ than the reference model, and coverage ranks this distance even better than the reward error ($\rho=-0.92$, against $-0.75$ on \pdpo and $-0.49$ on \rmpub). On \rmpub, four of the five better-covered predictions are closer. On \pdpo, as for combined rewards, covered predictions reach the reward but not the distance.

Uncovered rewards fail for lack of a direction, regardless of $\gamma$ used for decoding. Specifically, increasing $\gamma$ helps only covered rewards (App.~\ref{app:loo}). On \pgrpo, as $\gamma$ grows from 0.5 to 2, the median covered prediction gains reward and moves closer to $\pi_{\mathrm{tar}}$ (recovery 0.18 to 0.64), while the median uncovered one reaches only 0.26 and drifts away. Even $\gamma$ tuned on $\pi_{\mathrm{tar}}$ for each reward leaves uncovered rewards at a recovery of 0.60 on \pgrpo and 0.39 on \pdpo, against 1.04 and 0.88 for covered ones. In conclusion, scaling the directions the basis has cannot help represent a direction it lacks.

Composing also beats decoding the top expert alone. On \rmpub, \ours is closer to the held-out expert than the top expert on 9 of 10 rewards and recovers more of its reward on 8 (0.56 against 0.44). Best-of-16 reaches a similar reward (0.60) but needs $r_{\mathrm{tar}}$ and sixteen responses per prompt, and uniform weights are as close in KL but recover less (0.50).

\subsection{Image diffusion models: qualitative results}
\label{sec:exp:diffusion}

A similar result holds in image diffusion models. In Fig.~\ref{fig:diffusion_generations}, composing the experts trained on the other rewards reproduces the change the held-out expert makes (Target), such as more saturated colors, finer texture, and a centered subject on a blurred background. Unlike for language models, the weights are fit separately at each denoising step. Our finding is that the held-out expert's effect lies largely within what the other experts can express (App.~\ref{app:diffusion}), meaning that linear combinations of the conditional expectations of the policies in the basis can accurately predict the target denoiser.
\vspace{-0.5em}

\begin{figure}[t]
    \centering
    \includegraphics[width=\linewidth]{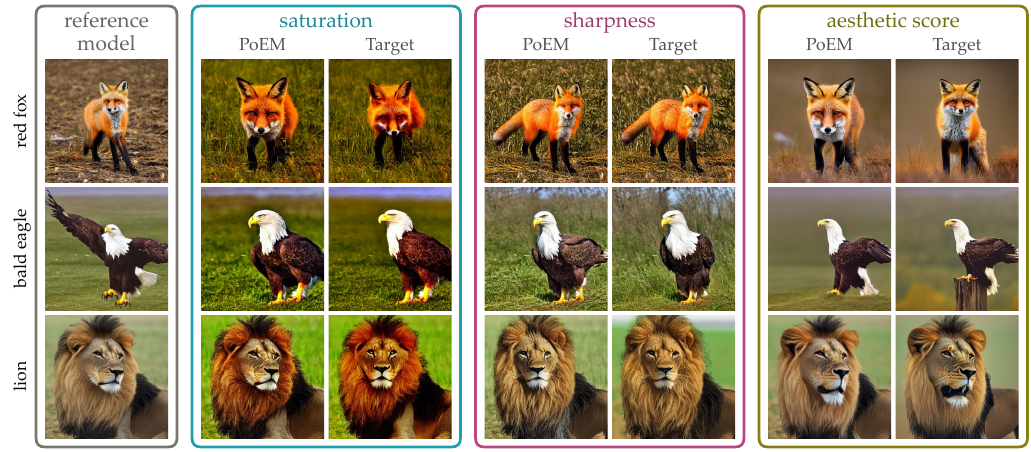}
    \caption{\textbf{Composing the other experts approximates the effect of RL fine-tuning on a held-out image reward.} For each of three rewards, \ours is composed from the experts trained on the other rewards, and Target is the model fine-tuned on that reward with DDPO~\citep{black2023training}. The first column shows the reference model.}
    \label{fig:diffusion_generations}
\end{figure}

\section{Related Work}
\label{sec:related}

\paragraph{Policy composition.} One way to combine post-trained models is to average their weight updates in parameter space. This works when the fine-tuned checkpoints sit in a shared basin and linear mode connectivity holds~\citep{rame2023rewarded,jang2023personalized,yang2024rewards,xie2025bone}. Averaging helps with a few models but gets worse as more are added, since updates in different directions dilute each other. Other methods combine models at decode time by mixing their next-token logits, with no further training. DExperts~\citep{liu2021dexperts}, contrastive decoding~\citep{li2023contrastive}, and proxy-tuning~\citep{liu2024tuning} use one positive expert and sometimes a negative one. Multi-LoRA methods~\citep{wu2024mixture,feng2024mixture,zou2025cached} pick among many adapters at each token using learned gates or task tags. MOD~\citep{Shi2024MOD} is closest to our setting. It mixes the logits of single-objective policies with user-chosen weights, which corresponds to \ours where the weights are given. DeRa~\citep{liu2024decoding} interpolates between an aligned policy and the reference model to change regularization strength at decode time. \ours differs in two ways. Its composition rule follows from the closed-form solution of KL-regularized RL (Eq.~\ref{eq:poe}). When the weights are unknown, it fits them from the experts' log-ratios on a small calibration set (Section~\ref{sec:method}).

\paragraph{Multi-objective alignment.} Multi-objective alignment trains one model, or a family of models, to balance several rewards with weights fixed at training time. Examples are multi-head reward models~\citep{wang2024interpretable}, multi-objective DPO~\citep{zhou2023beyond}, and conditional versions of DPO and PPO~\citep{gupta2025robust, agnihotri2025multi}. \ours instead composes existing single-reward experts at decode time and fits the weights afterwards from a small calibration set, so they do not have to be chosen before training.

\paragraph{Implicit rewards and policy space structure.} DPO~\citep{rafailov2023direct} shows that the KL-regularized optimum has a closed form in the reward. The trained policy's log-ratio to the reference model is the implicit reward, up to a per-prompt constant. Prior work uses this log-ratio as one score per response, for iterative self-training~\citep{chen2024bootstrapping}, for picking preference pairs by difficulty~\citep{qi2025difficulty}, and for process reward models~\citep{cui2025process}. We use the log-ratios of all experts together and regress the target reward on them to get the weights. This works with only a few experts. Work on datamodels~\citep{ilyas2022datamodels} and linear mode connectivity~\citep{frankle2020linear} also finds that fine-tuned models span far fewer dimensions in policy space than in parameter space.

\section{Discussion}

\paragraph{What RL post-training changes.} The experts' weight updates are nearly orthogonal, yet in policy space they move along only a few shared directions (Section~\ref{sec:geometry}). This fits the view that post-training mostly re-weights what the reference model already does. It is also why composition works. RL on a new reward can be predicted when it would move the policy in a direction the basis already has. We have only checked this on small models.

\paragraph{Previewing a reward before training on it.} \ours obtains a policy without training, so one can see what a candidate reward would do before paying for an RL run, and trust the preview only when coverage is high.

\paragraph{Approximating expensive rewards.} RL calls the target reward on every rollout, while \ours calls it only on a small calibration set. So RL on a reward that is expensive to evaluate, such as a large reward model, an LLM judge or a human rater, could be approximated with experts trained on cheap rewards. \rmpub does this on a small scale, and predicts each held-out 7-8B reward model from experts trained on the others. If the expensive reward depends on a direction that no cheap reward induces, coverage is low and \ours fails. Coverage also tells us which expensive rewards a basis can stand in for.

\paragraph{Building the basis.} Coverage shows which target rewards the current basis cannot reach, so the next expert could target the reward with the lowest coverage. Running $k$ experts at inference is also costly. Distilling \ours into a single model, or starting RL from it so that training only adds the missing directions, would remove this cost. We have not tested these ideas.

\section{Conclusion}
We studied whether the outcome of RL on a new reward can be predicted from policies already trained on other rewards. When the new reward is a linear combination of the old ones, composing the experts with the combination weights, at a scale set without any new training, the RL outcome's reward performs similarly to a second RL run does. When the new reward is not a combination, a coverage score computed from the experts ranks in advance which rewards can be reached by our basis. Rewards with low coverage stay far from the RL outcome at every $\gamma$ we tried. With experts trained on reward models, \ours is closer to the RL outcome than the reference model and the top expert on every combined reward, and coverage again ranks the held-out RMs. Together, these results suggest that a set of trained RL policies carry enough structure to compose new reward aligned behavior at inference time.

\section*{Limitations}
\label{sec:limitations}

The method's coverage is bounded by the policy space span of the chosen basis. Rewards whose target direction lies outside the span cannot be recovered, regardless of probe set size or regression strength. Practical use therefore depends on assembling a basis whose policy log-ratios cover the directions of interest, which is straightforward for taxonomies of related rewards and harder for rewards orthogonal to the available adapters. Exact composition assumes each expert is a KL-regularized optimum, with a shared reference model and $\beta$. In practice, a large number of post-training runs do not include the KL term in the loss. The method is most compelling when reward evaluation is expensive or the basis is large, because best-of-$N$ from the base model becomes more competitive as reward-evaluation cost shrinks. We have not tested long-context generation, or whether weights fitted on one reference model carry over to another.

\section*{Acknowledgements}
We thank Yoon Kim, Mehul Damani, and Idan Shenfeld for helpful discussions. This work was supported by ONR MURI grant N00014-26-1-2255. Giannis Daras is additionally supported by a Jane Street Research Collaboration award, Google's TPU Research Cloud (TRC) program, and Lambda's Research Grant Program.

\newpage

\bibliographystyle{plainnat}
\bibliography{bib}

%%%%%%%%%%%%%%%%%%%%%%%%%%%%%%%%%%%%%%%%%%%%%%%%%%%%%%%%%%%%

\newpage
\appendix
\section{A guarantee under coverage}
\label{app:coverage-bound}

Coverage (Section~\ref{sec:method-coverage}) asks whether the target reward is close to a combination of the experts' log-ratios on the calibration set. A uniform version of this condition bounds how far the sequence-level product can be from $\pi^*_{\mathrm{tar}}$. For any weights $\alpha$, define the implicit composed reward
\begin{equation}
\widehat r_{\alpha}(x,y) = \beta\sum_{k=1}^{n}\alpha_k\,\phi_k(x,y).
\label{eq:composed-implicit-reward}
\end{equation}
The sequence-level product of Eq.~\eqref{eq:poe} with these weights, $\pi_{\mathrm{seq}}\propto\pi_{\mathrm{ref}}\exp(\sum_k\alpha_k\phi_k)$, is exactly the optimizer of $J_{\widehat r_{\alpha}}$, even when the experts are not optimal for their basis rewards. Moreover, for any reward $r$ and policy $\pi$, the KL regularized objective satisfies the identity
\begin{equation}
J_r(\pi_r^*)-J_r(\pi) = \beta\, \mathbb{E}_{x\sim\mathcal{D}} \left[ \mathrm{KL}\!\left( \pi(\cdot\mid x)\,\|\,\pi_r^*(\cdot\mid x) \right) \right].
\label{eq:regularized-gap-identity}
\end{equation}

\begin{proposition}
If there exist weights $\alpha$ and a prompt-only function $b(x)$ such that
\begin{equation}
\sup_{x,y} \left| r_{\mathrm{tar}}(x,y) - \widehat r_{\alpha}(x,y) - b(x) \right| \leq \varepsilon,
\label{eq:coverage-assumption}
\end{equation}
then the sequence-level product $\pi_{\mathrm{seq}}$ with these weights obeys
\begin{align}
J_{r_{\mathrm{tar}}}(\pi^*_{\mathrm{tar}}) - J_{r_{\mathrm{tar}}}(\pi_{\mathrm{seq}}) &\leq 2\varepsilon,
\label{eq:coverage-objective-bound}\\
\mathbb{E}_{x\sim\mathcal{D}} \left[ \mathrm{KL}\!\left( \pi_{\mathrm{seq}}(\cdot\mid x) \,\|\, \pi^*_{\mathrm{tar}}(\cdot\mid x) \right) \right] &\leq \frac{2\varepsilon}{\beta}.
\label{eq:coverage-kl-bound}
\end{align}
\end{proposition}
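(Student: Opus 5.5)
The plan is to compare the two KL-regularized objectives $J_{r_{\mathrm{tar}}}$ and $J_{\widehat r_\alpha + b}$. The first step is to note that adding the prompt-only term $b(x)$ to $\widehat r_\alpha$ changes neither the optimizer nor objective differences. Concretely, $J_{\widehat r_\alpha+b}(\pi)=J_{\widehat r_\alpha}(\pi)+\mathbb{E}_x[b(x)]$ for every $\pi$. Hence $\pi_{\mathrm{seq}}$, which the text above identifies as the exact maximizer of $J_{\widehat r_\alpha}$, is also the maximizer of $J_{\widehat r_\alpha+b}$. Writing $\widehat r := \widehat r_\alpha + b$, the assumption in Eq.~\eqref{eq:coverage-assumption} says $\sup_{x,y}|r_{\mathrm{tar}}-\widehat r|\le\varepsilon$.

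The second step is a uniform perturbation bound on the objective. For any policy $\pi$,
\[
|J_{r_{\mathrm{tar}}}(\pi)-J_{\widehat r}(\pi)| = \bigl|\mathbb{E}_{x}\mathbb{E}_{y\sim\pi}[r_{\mathrm{tar}}-\widehat r]\bigr|\le\varepsilon,
\]
since the KL terms are identical and cancel. This requires the objectives to be finite. I would assume implicitly that the rewards are bounded, or at least that the expectations exist.

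The third step is the standard "optimize the wrong objective" chain:
\[
J_{r_{\mathrm{tar}}}(\pi^*_{\mathrm{tar}}) \le J_{\widehat r}(\pi^*_{\mathrm{tar}})+\varepsilon \le J_{\widehat r}(\pi_{\mathrm{seq}})+\varepsilon \le J_{r_{\mathrm{tar}}}(\pi_{\mathrm{seq}})+2\varepsilon.
\]
The middle inequality uses optimality of $\pi_{\mathrm{seq}}$ for $J_{\widehat r}$, and the two outer inequalities use the perturbation bound. This gives Eq.~\eqref{eq:coverage-objective-bound}.

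The fourth step obtains Eq.~\eqref{eq:coverage-kl-bound} by applying the identity in Eq.~\eqref{eq:regularized-gap-identity} with $r=r_{\mathrm{tar}}$ and $\pi=\pi_{\mathrm{seq}}$. The left-hand side of that identity is exactly the gap just bounded by $2\varepsilon$, so dividing by $\beta>0$ yields the KL bound.

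I expect no serious obstacle. The only delicate point is justifying the gap identity, which the paper states and we may assume. It follows from writing $J_r(\pi)=\beta\,\mathbb{E}_x[\log Z_r(x)-\mathrm{KL}(\pi\|\pi_r^*)]$ using Eq.~\eqref{eq:boltz}. A related point is that $\pi_{\mathrm{seq}}$ must be well-defined, which needs the normalizer of $\pi_{\mathrm{ref}}\exp(\sum_k\alpha_k\phi_k)$ to be finite. That holds, for example, on finite response spaces, and I would state it as a standing assumption.
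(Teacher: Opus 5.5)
Your proposal is correct and follows the paper's proof step for step: you note that the prompt-only shift $b(x)$ leaves $\pi_{\mathrm{seq}}$ optimal, bound the change in value from a uniformly $\varepsilon$-close reward, run the same three-inequality chain, and then obtain the KL bound from the gap identity with $r=r_{\mathrm{tar}}$ and $\pi=\pi_{\mathrm{seq}}$. Your extra remarks on finiteness of the objectives and of the normalizer of $\pi_{\mathrm{seq}}$ are reasonable standing assumptions that the paper leaves implicit.
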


\begin{proof}
Adding $b(x)$ to a reward shifts $J$ by the same constant for every policy, so $\pi_{\mathrm{seq}}$ also maximizes $J_{\widehat r_{\alpha}+b}$. Replacing a reward by a uniformly $\varepsilon$-close one changes the value of any policy by at most $\varepsilon$. Hence
\begin{equation*}
J_{r_{\mathrm{tar}}}(\pi^*_{\mathrm{tar}}) \leq J_{\widehat r_{\alpha}+b}(\pi^*_{\mathrm{tar}})+\varepsilon \leq J_{\widehat r_{\alpha}+b}(\pi_{\mathrm{seq}})+\varepsilon \leq J_{r_{\mathrm{tar}}}(\pi_{\mathrm{seq}})+2\varepsilon,
\end{equation*}
which is Eq.~\eqref{eq:coverage-objective-bound}. Eq.~\eqref{eq:coverage-kl-bound} then follows from Eq.~\eqref{eq:regularized-gap-identity} with $r=r_{\mathrm{tar}}$ and $\pi=\pi_{\mathrm{seq}}$.
\end{proof}

The bound holds for the sequence-level product, not for the decoder of Eq.~\eqref{eq:poe-decode}. App.~\ref{app:decoder-gap} measures the gap between the two.

\section{Experimental details}
\label{app:setup}

\begin{table}[b]
\centering
\caption{\textbf{The six bases and the experiments run on each.} The last three columns show the number of target rewards and the section that reports them. App.\ marks results shown only in the appendix. All language model experts are LoRA adapters on \texttt{Qwen3-0.6B}. $^\dagger$The weights are fit to $\pi_{\mathrm{tar}}$ at every denoising step.}
\label{tab:bases}
\small
\setlength{\tabcolsep}{4pt}
\begin{tabular}{@{}lllccc@{}}
\toprule
Basis & Experts trained on & RL (KL term) & Geometry & Combined & Held-out \\
\midrule
\pgrpo & 20 programmatic rewards & GRPO (loss) & Sec.~\ref{sec:geometry} & 32, Sec.~\ref{sec:dpo_synth} & 20, Sec.~\ref{sec:new-rewards} \\
\pdpo & the same 20 rewards & DPO ($\beta$) & App.~\ref{app:rank-figs} & 32, Sec.~\ref{sec:dpo_synth} & 20, Sec.~\ref{sec:new-rewards} \\
\rmrb & 4 RewardBench RMs & PPO (reward) & App.~\ref{app:neural-results} & 8, Sec.~\ref{sec:dpo_synth} & 4, App.~\ref{app:neural-results} \\
\rmhh & 2 ArmoRM heads & PPO (reward) & -- & 3, Sec.~\ref{sec:dpo_synth} & -- \\
\rmpub & 10 public RMs & PPO (reward) & Sec.~\ref{sec:new-rewards} & -- & 10, Sec.~\ref{sec:new-rewards} \\
\sdddpo & 13 image rewards & DDPO (none) & -- & -- & 13$^\dagger$, Sec.~\ref{sec:exp:diffusion} \\
\bottomrule
\end{tabular}
\end{table}

The subsections below include the models, data and protocol details that Section~\ref{sec:setup} leaves out.

\subsection{Programmatic bases}
\label{app:dpo-setup}
\label{app:prog-setup}

Table~\ref{tab:prog-rewards} defines the 20 rewards and Table~\ref{tab:prog-hparams} the training recipes. The $z$-scores use the mean and standard deviation of each reward over the $36{,}000$ completions the DPO preference pairs are drawn from. The \pdpo experts are trained with preference pairs constructed by sampling responses from the reference and labelling the higher-reward response as preferred. The 32 combined rewards are 6, 6, 8, 6 and 6 for $k=2$, 4, 8, 10 and 16. Fifteen have hand-set weights (peaked, with a top weight of 0.85 for $k\leq4$ and 0.82 otherwise, medium or uniform), and seventeen draw them from a Dirichlet(1) distribution. The second \pgrpo seed changes only the order of the training prompts. GDPO normalizes each basis reward within its group of rollouts, weights the results by $\bm{\alpha}^*$, sums them and whitens the sum over the batch.

\begin{table}[tb]
\centering
\caption{\textbf{The 20 programmatic rewards.} Densities are percentages of the response's words (tokens for the part-of-speech rewards). Every reward is 0 for responses under five words.}
\label{tab:prog-rewards}
\small
\begin{tabular}{@{}lp{0.68\linewidth}@{}}
\toprule
Reward & Definition \\
\midrule
vocabulary diversity & type-token ratio of the words \\
repetition & share of word bigrams that are unique (higher means less repetitive) \\
stopword density & share of tokens in the NLTK English stopword list \\
lexical density & share of words outside a fixed list of function words \\
conjunction density & share of words in a list of 21 conjunctions \\
formality & mean word length, minus 10 times the contraction rate and 5 times the rate of first-person singular pronouns \\
readability & Flesch reading ease \\
mean sentence length & mean number of words per sentence \\
average word length & mean number of characters per word \\
modal density & share of words that are modal verbs (\emph{can, could, would, should, may, might, must, will, shall, ought}) \\
structure & markdown structure per line: bullets, numbered items, headers and bold text \\
paragraph structure & number of paragraphs with at least ten words \\
concreteness & numbers and capitalized words per 100 words \\
noun density & share of tokens tagged as nouns \\
code content & code markers (backticks and keywords such as \texttt{def} or \texttt{import}) per 100 words \\
verb density & share of tokens tagged as verbs \\
adverb density & share of words ending in \emph{-ly} \\
personal narrative & share of words that are first-person pronouns or end in \emph{-ed} \\
polysyllabic word density & share of words with three or more syllables \\
second person & share of words that are second-person pronouns \\
\bottomrule
\end{tabular}
\end{table}

\begin{table}[tb]
\centering
\caption{\textbf{Training recipes of the programmatic bases.} Experts and targets share the recipe of their basis.}
\label{tab:prog-hparams}
\small
\begin{tabular}{@{}lp{0.36\linewidth}p{0.36\linewidth}@{}}
\toprule
 & \pgrpo & \pdpo \\
\midrule
reference model & \multicolumn{2}{l}{\texttt{Qwen3-0.6B}, thinking disabled} \\
adapter & \multicolumn{2}{l}{LoRA, rank 64, $\alpha=128$, dropout 0.05, on all 7 projection matrices} \\
initialization & one shared & one shared for the experts, three for the targets (by $k$) \\
algorithm & GRPO, 8 rollouts per prompt & DPO, sigmoid loss \\
KL to $\pi_{\mathrm{ref}}$ & $k_3$ estimator in the loss, coefficient 0.04 & $\beta=0.1$ \\
optimizer & AdamW, lr $10^{-5}$, weight decay 0.01, gradient clip 1.0 & AdamW, lr $5\times10^{-5}$ \\
batch & 64 prompts $\times$ 8 rollouts, 2 updates per step & 16 pairs \\
training length & 1 epoch (28 steps) & 6 epochs \\
training data & 1{,}518 UltraChat prompts (1{,}792 with repeats) & 4 pairs per prompt for 4{,}500 UltraChat prompts \\
sampling & temperature 1.0, top-$p$ 1.0 & -- \\
max.\ length & 512 prompt and 512 response tokens & 512 tokens \\
\bottomrule
\end{tabular}
\end{table}

\subsection{Reward model bases}
\label{app:rm-setup}

Table~\ref{tab:rm-bases} includes the recipes. The \rmrb RMs are \texttt{Skywork-Reward-V2-Llama-3.1-8B}~\citep{liu2026skyworkrewardv2scalingpreferencedata}, \texttt{FsfairX-LLaMA3-RM-v0.1}~\citep{dong2024rlhf}, \texttt{GRM-Llama3-8B-rewardmodel-ft}~\citep{yang2024regularizing} and \texttt{Llama-3.1-Tulu-3-8B-RM}~\citep{lambert2024tulu3}. The \rmhh experts use heads 0 (helpfulness) and 10 (safety) of \texttt{ArmoRM-Llama3-8B-v0.1}~\citep{wang2024interpretable}. The \rmpub RMs are \texttt{Skywork-Reward-V2-Llama-3.1-8B}, \texttt{internlm2-7b-reward}~\citep{cai2024internlm2}, \texttt{Eurus-RM-7b}~\citep{yuan2024advancing}, \texttt{AceMath-7B-RM}~\citep{liu2024acemath}, \texttt{OLMo-2-1124-7B-RM}~\citep{olmo2025olmo2}, \texttt{reward-model-deberta-v3-large-v2}, \texttt{oasst-rm-2.1-pythia-1.4b}~\citep{kopf2023openassistant}, \texttt{SteamSHP-flan-t5-xl}~\citep{ethayarajh2022understanding}, \texttt{Starling-RM-7B-alpha}~\citep{zhu2023starling} and \texttt{Llama8B-CreativeWritingVerifier}. Writing Sky, Fs, GRM and Tulu for the $z$-scored \rmrb scores, its eight targets are $\frac12$Sky$+\frac12$Fs, $\frac12$Sky$+\frac12$Tulu, $\frac12$Fs$+\frac12$Tulu, $0.85$Sky$+0.15$Fs, $\frac13$(Sky$+$Fs$+$Tulu), $0.6$Sky$+0.3$Fs$+0.1$Tulu, $\frac14$(Sky$+$Fs$+$Tulu$+$GRM) and $0.4$Sky$+0.3$Fs$+0.2$Tulu$+0.1$GRM. The three \rmhh targets weight helpfulness and harmlessness $0.5/0.5$, $0.75/0.25$ and $0.25/0.75$.

The full \rmhh harmlessness run and the full \rmpub SteamSHP run exploited their rewards, so we use their step-100 checkpoints. The \rmpub health check runs on the 600 prompts of App.~\ref{app:geometry-setup}. A candidate fails if, on benign prompts, any of four judges other than its own RM (the two ArmoRM heads, Qwen3Guard and Skywork-Reward-V2) scores it at least 0.5 standard deviations below $\pi_{\mathrm{ref}}$, or if one surface feature (emoji, a repeated opening or a refusal) appears in at least half of its responses at no less than twice $\pi_{\mathrm{ref}}$'s rate. The ten members were fixed before any held-out decode of this basis. Four keep mild surface tics (emoji sign-offs in 19 to 26\% of benign responses for SteamSHP and Starling, and a repeated opening in 11 to 22\% on one prompt half for Skywork and the creative-writing verifier), and the OASST-Pythia expert complies more with harmful requests than the others.

\begin{table}[tb]
\centering
\caption{\textbf{Training recipes of the reward model bases.}}
\label{tab:rm-bases}
\small
\begin{tabular}{@{}lp{0.22\linewidth}p{0.22\linewidth}p{0.22\linewidth}@{}}
\toprule
 & \rmrb & \rmhh & \rmpub \\
\midrule
expert reward & raw score & raw score & $z$-scored score \\
training prompts & 1{,}861 UltraChat & \multicolumn{2}{l}{1{,}979 UltraChat and PKU-SafeRLHF, about half each} \\
initialization & independent per run & independent per run & one shared \\
adapter & \multicolumn{3}{l}{LoRA, rank 64, $\alpha=128$, all linear layers} \\
algorithm & \multicolumn{3}{l}{PPO with a separate full-parameter critic, GAE without discounting, 1 PPO epoch} \\
optimizer & \multicolumn{3}{l}{AdamW, lr $3\times10^{-6}$ (actor) and $10^{-5}$ (critic), weight decay 0.01, gradient clip 1.0} \\
batch & \multicolumn{3}{l}{32 prompts $\times$ 8 rollouts per step, mini-batches of 8 prompts} \\
training length & 232 steps (4 epochs) & \multicolumn{2}{l}{232 steps (3.75 epochs)} \\
KL to $\pi_{\mathrm{ref}}$ & \multicolumn{3}{l}{$k_3$ estimator in the reward, adaptive coefficient from $10^{-3}$, target 0.1} \\
sampling & \multicolumn{3}{l}{temperature 0.7, top-$p$ 0.8, top-$k$ 20} \\
max.\ length & \multicolumn{3}{l}{1{,}024 prompt and 512 response tokens} \\
\bottomrule
\end{tabular}
\end{table}

\subsection{Diffusion basis}
\label{app:diffusion-setup}

Table~\ref{tab:diffusion-setup} includes the training configuration of the RMS-contrast expert. The other experts are trained with the same code. At every denoising step, the weights come from a least-squares fit, with an intercept, of the held-out expert's noise prediction on those of the other twelve experts at the current latent.

\begin{table}[tb]
\centering
\caption{\textbf{Training and evaluation of the diffusion experts.}}
\label{tab:diffusion-setup}
\small
\begin{tabular}{@{}lp{0.7\linewidth}@{}}
\toprule
reference model & Stable Diffusion v1.4~\citep{rombach2022high} (UNet, VAE and text encoder frozen) \\
adapter & LoRA, rank 4, on the query, key, value and output projections of every attention layer \\
algorithm & DDPO, clip range $10^{-4}$, no KL penalty \\
optimizer & Adam, lr $3\times10^{-4}$, weight decay $10^{-4}$, gradient clip 1.0 \\
batch & 256 images per epoch, 4 updates of 64 images \\
training length & 42 epochs \\
training sampler & DDIM, 50 steps, $\eta=1$, classifier-free guidance 5.0 \\
training prompts & ImageNet class names (first 398 classes) \\
\midrule
evaluation sampler & DDIM, 50 steps, $\eta=0$, classifier-free guidance 5.0 \\
evaluation prompts & ``a photo of a red fox'', ``a photo of a bald eagle'', ``a photo of a lion'', one seed each \\
\bottomrule
\end{tabular}
\end{table}

\subsection{Evaluation}
\label{app:eval-setup}

Table~\ref{tab:eval-setup} lists the prompt sets and the baselines' settings. On the programmatic bases, the calibration prompts come from the same pool of 500 held-out UltraChat prompts as the evaluation prompts. The weight and coverage fits drop the 48 of them that are among the first 50 prompts of the pool, which include the evaluation prompts, while the Gram matrix of $\gamma_{\mathrm{geom}}$ uses all 452.

\paragraph{Fits.} For combined rewards on the programmatic bases, $\hat{\bm{\alpha}}$ is a ridge regression with a large penalty on negative weights, its strength chosen by 5-fold cross-validation over prompts, normalized to $\sum_k|\hat\alpha_k|=1$. Weights with $|\hat\alpha_k|<0.01$ are dropped at decoding. Non-negative least squares keeps at most the eight largest weights and normalizes them to sum to one. Coverage averages the validation $R^2$ over 20 random splits that hold out 20\% of the prompts. The 0.3 threshold was set on an earlier decode of the same experts. Within-expert coverage regresses the reward and the log-ratios on indicators of the expert that wrote each response, and fits a ridge regression on the residuals.

\paragraph{Composition strength.}
\label{app:gamma}
On the RM bases, $\gamma_{\mathrm{geom}}$ uses the expert calibration set instead of reference model responses. For the drift-matched $\gamma^\star$, let $D(\pi)$ be the drop in mean per-token log-likelihood under $\pi_{\mathrm{ref}}$ when $\pi_{\mathrm{ref}}$'s greedy responses to the evaluation prompts are replaced by those of $\pi$. With $\alpha$ rescaled so that $\sum_k|\alpha_k|=1$, $\gamma^\star$ solves
\begin{equation}
D\big(\pi_{\methodm(\alpha;\gamma^\star)}\big)=\sum_k|\alpha_k|\,D(\pi_k).
\label{eq:gamma-star}
\end{equation}
We decode \ours at $\gamma\in\{0.5,1,1.5,2,3\}$, set $D=0$ at $\gamma=0$, make $D$ non-decreasing in $\gamma$ with a running maximum and interpolate it linearly. If the right-hand side is above the whole curve, $\gamma^\star=3$. $\gamma^\star$ thus costs a decode of every expert and of \ours at each grid value.

\begin{table}[tb]
\centering
\caption{\textbf{Prompt sets and baseline settings.} All decoding uses vLLM~\citep{kwon2023efficient}.}
\label{tab:eval-setup}
\small
\begin{tabular}{@{}lp{0.37\linewidth}p{0.37\linewidth}@{}}
\toprule
 & \pgrpo, \pdpo & \rmrb, \rmhh, \rmpub \\
\midrule
evaluation prompts & 30 held-out UltraChat & \rmrb: 300 held-out UltraChat. \rmhh, \rmpub: 197 UltraChat and 150 PKU-SafeRLHF red-team, held out \\
calibration responses & 452 held-out UltraChat prompts $\times$ 4, temperature 1.0, up to 128 tokens & 4 per expert for 150 (\rmrb) or 300 training prompts, temperature 0.7, top-$p$ 0.8, top-$k$ 20, up to 512 tokens \\
best-of-$N$ & 64 $\pi_{\mathrm{ref}}$ samples per prompt, temperature 1.0, top-$p$ 0.95, top-$k$ 20 & 16 $\pi_{\mathrm{ref}}$ samples per prompt, temperature 0.7, top-$p$ 0.8, top-$k$ 20 \\
 & \multicolumn{2}{l}{exact expectation of the best of $N$ over random subsets} \\
top expert & \multicolumn{2}{l}{largest weight in $\bm{\alpha}^*$ (combined rewards) or in $\hat{\bm{\alpha}}$ (held-out rewards)} \\
DeRa & top expert alone ($\lambda=1$) or $2\log\pi_{\mathrm{top}}-\log\pi_{\mathrm{ref}}$ ($\lambda=2$) & -- \\
MOD & forward-KL rule & -- \\
\bottomrule
\end{tabular}
\end{table}

\subsection{Geometry}
\label{app:geometry-setup}

The 600 prompts are 150 UltraChat and 150 PKU-SafeRLHF prompts from the training set of \rmhh and \rmpub, and 150 of each held out from every training set. Their four responses are sampled from $\pi_{\mathrm{ref}}$ at temperature 0.7, top-$p$ 0.8 and top-$k$ 20, up to 512 tokens. $\Delta\Theta$ is the uncentered Gram matrix of the flattened LoRA updates $BA$, reported only for bases whose experts share one initialization. Intervals come from 200 bootstrap draws over prompts, and the log-ratio and reward predictions use ordinary least squares on 5 prompt-disjoint 80/20 splits.

\FloatBarrier
\section{Additional results}
\label{app:results}

\subsection{Additional results on combined rewards}
\label{app:composite-figs}

Fig.~\ref{fig:recovery-bars} shows the combined rewards of the programmatic bases at $\gamma=1$ and $\gamma_{\mathrm{geom}}$. We then break recovery down by the number of composed rewards $k$ (Figs.~\ref{fig:recovery-by-k} and~\ref{fig:k-dependence}) and compare \ours with decoding-time baselines (Fig.~\ref{fig:error-baselines}).

\paragraph{Comparison with a second RL run.} We repeated RL with a different seed for 6 combined rewards on \pdpo and 9 on \pgrpo. On the nine \pgrpo rewards, \ours$(\bm{\alpha}^*)$ at $\gamma_{\mathrm{geom}}$ has the same median reward error as the second run (0.12), but the second run is closer on six of them. Retraining $\pi_{\mathrm{tar}}$ with GDPO, a multi-reward RL algorithm, misses the first run by 0.09 on 6 \pgrpo combined rewards (Fig.~\ref{fig:error-baselines}). On \pdpo, the second run is only a little closer to $\pi_{\mathrm{tar}}$ than the reference model is (KL 0.54 against 0.69). Distance to $\pi_{\mathrm{tar}}$ is a noisy target there.

\begin{figure}[t]
\centering
\includegraphics[width=\linewidth]{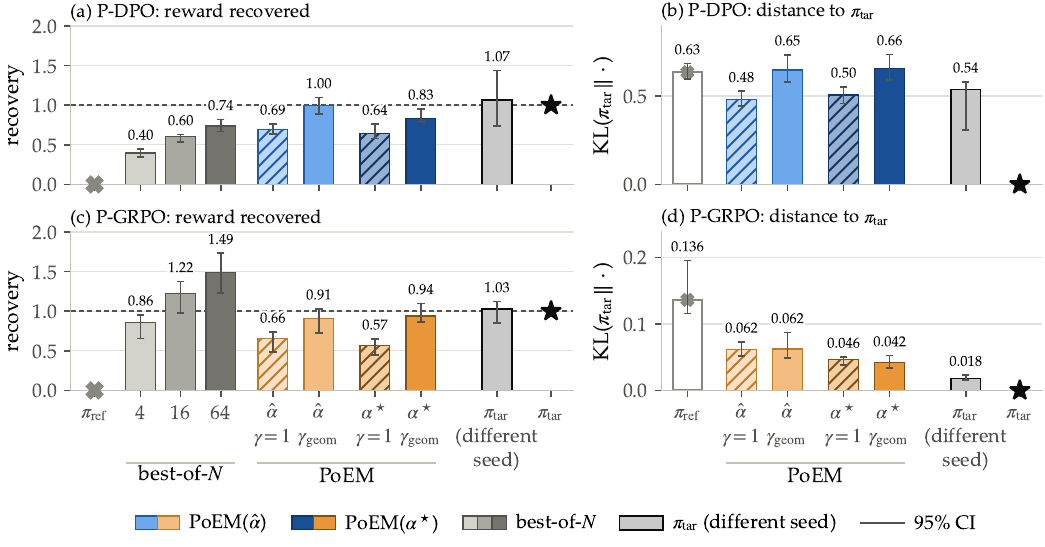}
\caption{\textbf{\ours recovers RL's reward gain on combined rewards.} Bars show medians over 32 combined rewards, with 95\% intervals. The left panels show what share of the reward gain of the RL-trained policy $\pi_{\mathrm{tar}}$ each method reaches, where 1 matches $\pi_{\mathrm{tar}}$. The right panels show the KL divergence from $\pi_{\mathrm{tar}}$, where lower is closer. \ours uses either weights fitted to the experts' log-ratios ($\hat{\bm{\alpha}}$) or the true weights of the combined reward ($\bm{\alpha}^*$), and never sees $\pi_{\mathrm{tar}}$. On \pgrpo, it also ends much closer to $\pi_{\mathrm{tar}}$ than the reference model does. For some combined rewards, a second RL run with a different seed shows how much RL itself varies.}
\label{fig:recovery-bars}
\end{figure}

\begin{figure}[t]
\centering
\includegraphics[width=\linewidth]{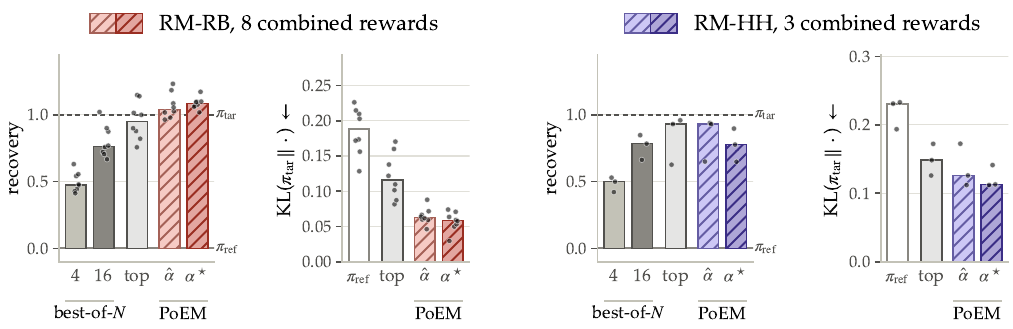}
\caption{\textbf{On combined rewards of reward models, \ours ends closest to $\pi_{\mathrm{tar}}$.} The left pair combines the four RewardBench reward models (\rmrb) and the right pair ArmoRM's helpfulness and harmlessness heads (\rmhh). In each pair, the first panel shows the share of $\pi_{\mathrm{tar}}$'s reward gain and the second the KL divergence from $\pi_{\mathrm{tar}}$. Bars are medians and dots are single combined rewards. \ours either fits its weights to the experts' log-ratios ($\hat{\bm{\alpha}}$) or uses the true weights ($\bm{\alpha}^*$). The top expert is the one with the largest true weight.}
\label{fig:neural-composites}
\end{figure}

\begin{figure}[t]
\centering
\includegraphics[width=\linewidth]{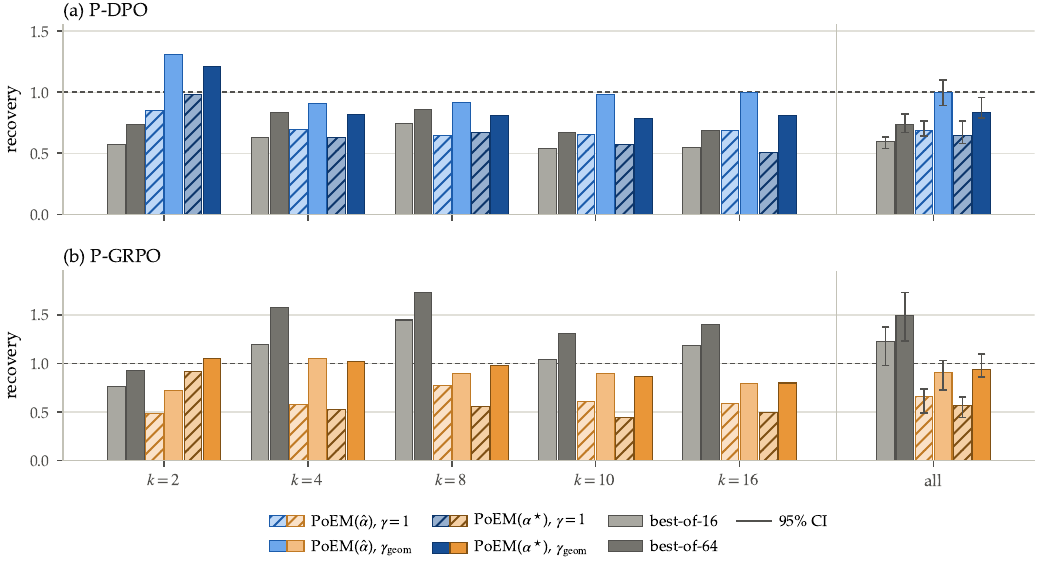}
\caption{\textbf{With $\gamma_{\mathrm{geom}}$, recovery holds up as more rewards are composed.} Median recovery for each $k$ and for all 32 combined rewards, drawn as in Fig.~\ref{fig:recovery-bars}. At $\gamma=1$, recovery of \ours$(\bm{\alpha}^*)$ drops once more than two rewards are composed, while $\gamma_{\mathrm{geom}}$ keeps it between about 0.8 and 1.2. Best-of-$N$ falls short of $\pi_{\mathrm{tar}}$ on \pdpo and overshoots it on \pgrpo for $k\geq4$.}
\label{fig:recovery-by-k}
\end{figure}

\begin{figure}[t]
\centering
\includegraphics[width=\linewidth]{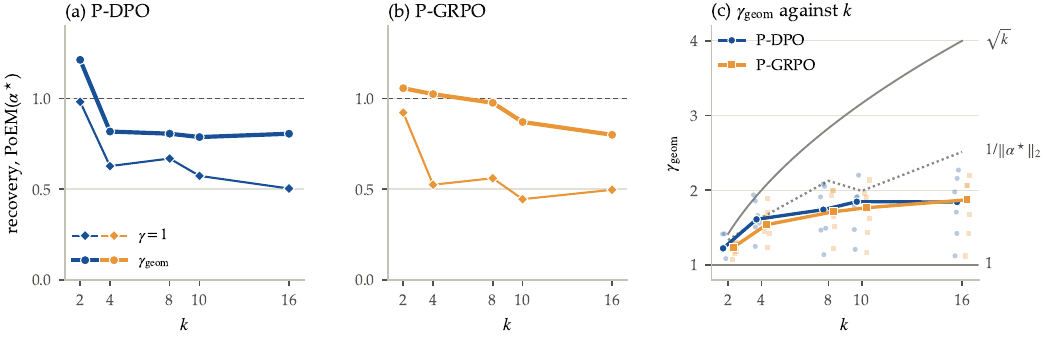}
\caption{\textbf{Recovery at $\gamma=1$ drops once $k>2$, and $\gamma_{\mathrm{geom}}$ makes up much of the drop.} (a,~b)~Median recovery of \ours$(\bm{\alpha}^*)$ at $\gamma=1$ (thin) and at $\gamma_{\mathrm{geom}}$ (thick). For $k\geq4$, $\gamma_{\mathrm{geom}}$ closes about half of the gap to $\pi_{\mathrm{tar}}$ on \pdpo and most of it on \pgrpo. (c)~$\gamma_{\mathrm{geom}}$ grows with $k$ because averaging more experts shrinks the composed log-ratio. It follows the value expected for independent experts, $1/\|\bm{\alpha}^*\|_2$, up to $k=4$ and falls below it for larger $k$, where the experts overlap.}
\label{fig:k-dependence}
\end{figure}

\begin{figure}[t]
\centering
\includegraphics[width=\linewidth]{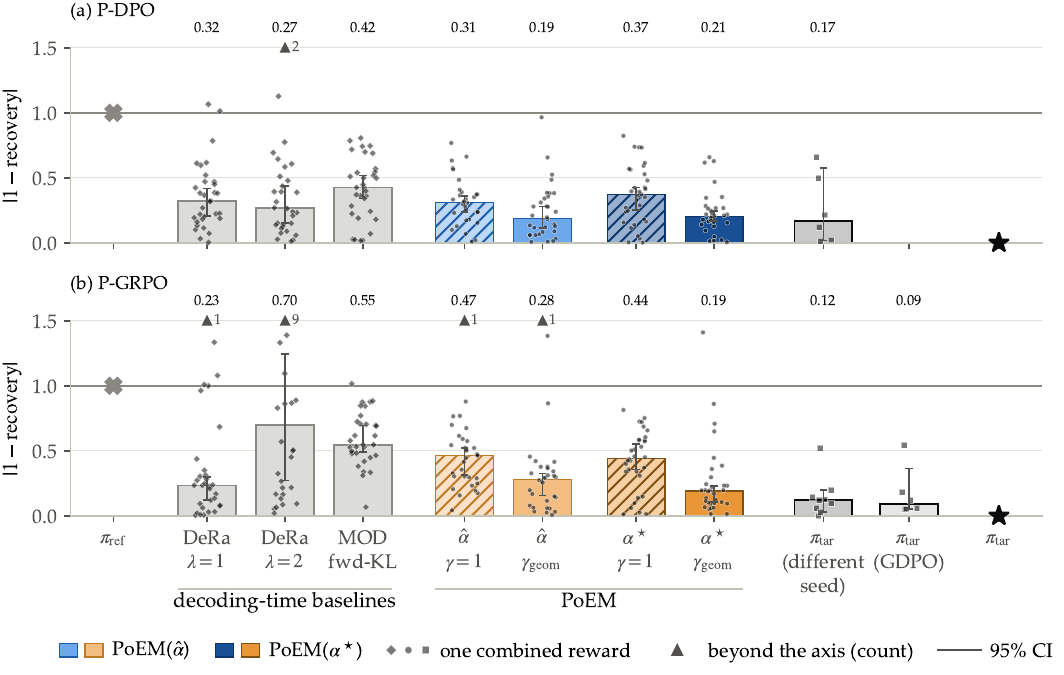}
\caption{\textbf{At $\gamma_{\mathrm{geom}}$, \ours has the lowest reward error of the decoding-time methods on \pdpo, while on \pgrpo the top expert does about as well.} Error is $|1-\text{recovery}|$ on the combined rewards of Fig.~\ref{fig:recovery-bars}. DeRa~\citep{liu2024decoding} decodes the highest-weight expert alone ($\lambda=1$) or extrapolates it ($\lambda=2$). MOD~\citep{Shi2024MOD} fuses the experts under a forward-KL rule (its reverse-KL rule coincides with \ours$(\bm{\alpha}^*)$ at $\gamma=1$). The last columns retrain $\pi_{\mathrm{tar}}$ with another seed or, on \pgrpo, with GDPO, a multi-reward RL algorithm. They show how much RL itself varies.}
\label{fig:error-baselines}
\end{figure}

\FloatBarrier
\subsection{Gap between the decoder and the sequence-level product}
\label{app:decoder-gap}
\suppressfloats[t]

The decoder in Eq.~\eqref{eq:poe-decode} normalizes at every token, so its distribution $\pi_{\mathrm{tok}}$ differs from the sequence-level product $\pi_{\mathrm{seq}}$ in Eq.~\eqref{eq:poe}. The two are related exactly by $\pi_{\mathrm{seq}}(y)=\pi_{\mathrm{tok}}(y)\prod_t Z_t(h_t)/Z$, where $Z_t(h_t)$ is the decoder's normalizing constant at prefix $h_t$. Weighting decoder samples by $\prod_t Z_t(h_t)$ therefore gives an importance sample of $\pi_{\mathrm{seq}}$. We do this for three \pgrpo combined rewards ($k=2,4,8$) at $\gamma=1$ and $\gamma_{\mathrm{geom}}$, drawing 64 responses per prompt at temperature 1 on 16 evaluation prompts, over the 96-token evaluation window. Table~\ref{tab:decoder-gap} reports the results. The two distributions differ by 0.6 to 2.7 nats per response, more for larger $k$ and $\gamma$. Reweighting moves the combined reward by at most 0.03 $z$-units, and the sequence-level product is further from $\pi_{\mathrm{tar}}$ than the decoder in all six settings, significantly in five. So local normalization is not what limits \ours on this basis.

\begin{table}[ht]
\centering
\caption{\textbf{The decoder against the sequence-level product}, estimated by importance sampling on \pgrpo. KL is $\mathrm{KL}(\pi_{\mathrm{tok}}\|\pi_{\mathrm{seq}})$ in nats per 96-token response, and ESS is the effective sample size as a fraction of the samples. The reward change is the combined reward under $\pi_{\mathrm{seq}}$ minus that under $\pi_{\mathrm{tok}}$, in $z$-units. Relative KL is $\mathrm{KL}(\pi_{\mathrm{tar}}\|\cdot)/\mathrm{KL}(\pi_{\mathrm{tar}}\|\pi_{\mathrm{ref}})$. For each $k$, the second row uses $\gamma_{\mathrm{geom}}$.}
\label{tab:decoder-gap}
\begin{tabular}{@{}rrrrrr@{}}
\toprule
$k$ & $\gamma$ & KL & ESS & reward change & relative KL, $\pi_{\mathrm{tok}}$ / $\pi_{\mathrm{seq}}$ \\
\midrule
2 & 1 & 0.62 & 0.53 & $+0.02$ & 0.25 / 0.30 \\
2 & 1.29 & 0.65 & 0.48 & $-0.02$ & 0.35 / 0.38 \\
4 & 1 & 1.10 & 0.34 & $+0.00$ & 0.22 / 0.28 \\
4 & 1.63 & 1.48 & 0.20 & $+0.02$ & 0.19 / 0.30 \\
8 & 1 & 1.59 & 0.21 & $-0.03$ & 0.45 / 0.61 \\
8 & 2.00 & 2.72 & 0.11 & $-0.03$ & 0.51 / 0.76 \\
\bottomrule
\end{tabular}
\end{table}

\FloatBarrier
\subsection{Held-out reward recovery}
\label{app:loo}
\suppressfloats[t]

This section includes the per-target results for Section~\ref{sec:new-rewards} and compares values of $\gamma$ (Fig.~\ref{fig:loo-gamma-rules}). Fig.~\ref{fig:loo-coverage} shows recovery against coverage and against the distance to $\pi_{\mathrm{tar}}$, with arrows that follow the median prediction as $\gamma$ grows.

\begin{figure}[ht]
\centering
\includegraphics[width=\linewidth]{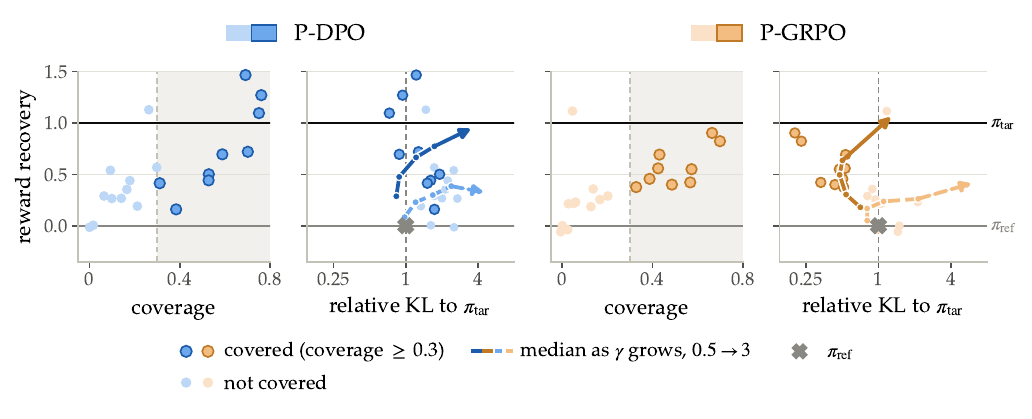}
\caption{\textbf{Coverage predicts which held-out rewards \ours can reach.} Each dot is one expert, held out and predicted by composing the other 19. Coverage is computed from those 19 before any decoding. Covered rewards (dark) recover more of $\pi_{\mathrm{tar}}$'s reward gain. On \pgrpo they also end closer to $\pi_{\mathrm{tar}}$ than the reference model, which sits at a relative KL of 1. Arrows show how the median prediction moves as $\gamma$ grows.}
\label{fig:loo-coverage}
\end{figure}

\begin{figure}[ht]
\centering
\includegraphics[width=\linewidth]{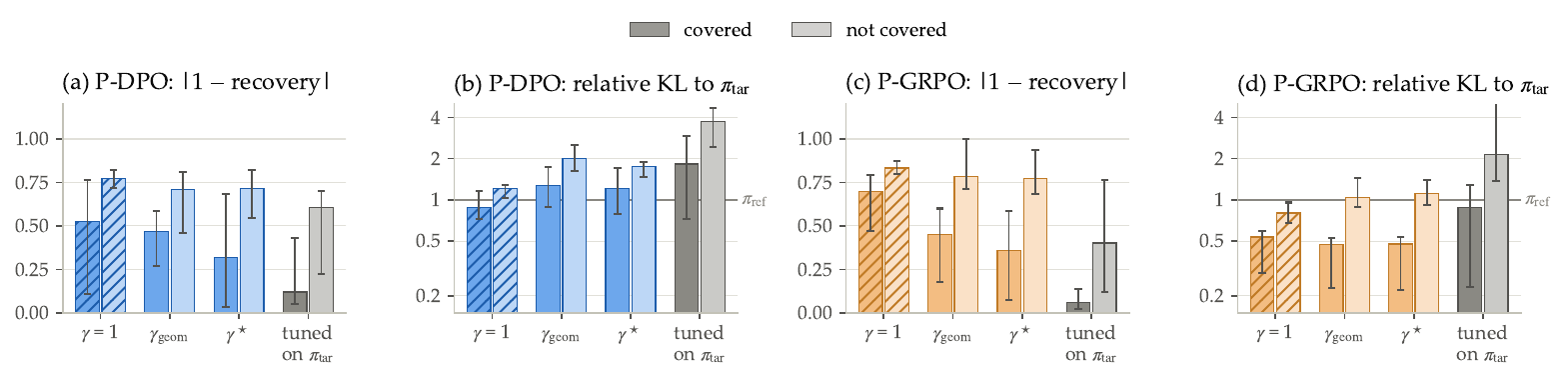}
\caption{\textbf{Uncovered rewards miss a direction, not $\gamma$.} Median reward error $|1-\text{recovery}|$ (a,~c) and relative KL to $\pi_{\mathrm{tar}}$ (b,~d) for covered (dark) and uncovered (light) held-out rewards. A relative KL below 1 means closer to $\pi_{\mathrm{tar}}$ than $\pi_{\mathrm{ref}}$ is. $\gamma=1$, $\gamma_{\mathrm{geom}}$ and $\gamma^\star$ (Sec.~\ref{sec:method-llm}) are set without $\pi_{\mathrm{tar}}$. The grey bars, shown only as a reference, tune $\gamma$ on $\pi_{\mathrm{tar}}$ for each reward. Even then, uncovered rewards keep a large error and stay far from $\pi_{\mathrm{tar}}$. On \pgrpo, every covered reward ends closer to $\pi_{\mathrm{tar}}$ than $\pi_{\mathrm{ref}}$ at $\gamma=1$, $\gamma_{\mathrm{geom}}$ and $\gamma^\star$.}
\label{fig:loo-gamma-rules}
\end{figure}

\FloatBarrier
\subsection{Policy space geometry: additional figures}
\label{app:rank-figs}
\suppressfloats[t]

Fig.~\ref{fig:rank-spectra} repeats the measurement of Fig.~\ref{fig:rank-grpo} for ten policies trained on public reward models and shows the three spaces for each basis, including \pdpo.

\begin{figure}[ht]
\centering
\includegraphics[width=\linewidth]{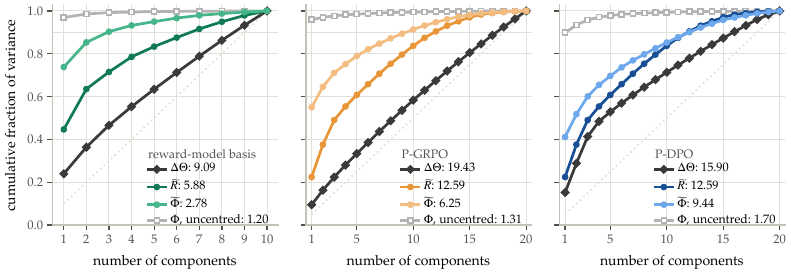}
\caption{\textbf{The gap depends on how the experts were trained.} Cumulative variance of the weight updates, rewards and log-ratios for the ten public reward models, \pgrpo and \pdpo. Both programmatic bases show a gap (ratios 0.50 and 0.75).}
\label{fig:rank-spectra}
\end{figure}

\FloatBarrier
\subsection{Experts trained with reward models: additional results}
\label{app:neural-results}
\label{app:neural-loo}
\suppressfloats[t]

We hold out each expert of \rmpub (ten models) and \rmrb (four models) in turn. Its reward becomes $r_{\mathrm{tar}}$ and the expert itself plays $\pi_{\mathrm{tar}}$ (written $r_j$ and $\pi_j$ in the figures). \ours composes the other experts with $\hat{\bm{\alpha}}$, fitted by non-negative least squares on their responses, with at most eight non-zero weights normalized to sum to one. The held-out expert's responses are left out of the fit, and $\pi_{\mathrm{tar}}$ is not used for decoding either. Relative KL is $\mathrm{KL}(\pi_{\mathrm{tar}}\|\cdot)/\mathrm{KL}(\pi_{\mathrm{tar}}\|\pi_{\mathrm{ref}})$, and a value below 1 means closer to $\pi_{\mathrm{tar}}$ than $\pi_{\mathrm{ref}}$ is.

\begin{figure}[h]
\centering
\includegraphics[width=\linewidth]{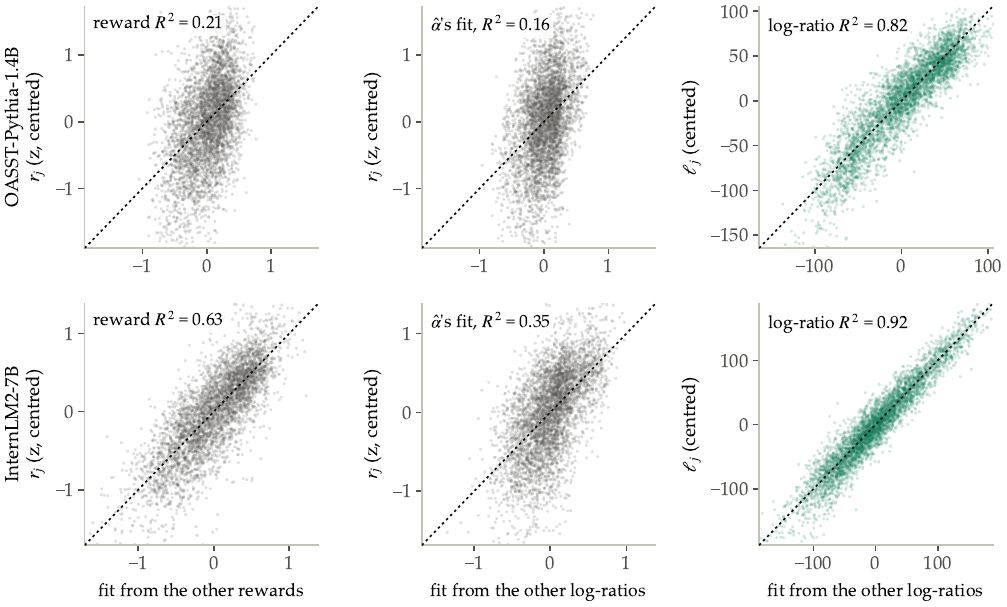}
\caption{\textbf{A held-out expert's log-ratio can be close to a combination of the others' when its reward is not.} Two held-out models of \rmpub, OASST-Pythia-1.4B (top) and InternLM2-7B (bottom), on held-out prompts, with every quantity centred per prompt. Left: the target reward against its fit from the other nine rewards. Middle: the same reward against $\hat{\bm{\alpha}}$'s fit from the other nine log-ratios. Right: the held-out expert's log-ratio $\ell_j=\log\pi_j-\log\pi_{\mathrm{ref}}$ against its fit from the other log-ratios.}
\label{fig:neural-linear}
\end{figure}

On \rmpub the ten log-ratios have effective rank 2.84 on reference-model responses, against 6.96 for the ten rewards. On the experts' own responses the gap narrows to 6.09 against 8.00. On \rmrb the log-ratios have lower rank than the rewards on reference-model responses (1.64 against 2.30) but not on the experts' own responses (2.77 against 2.43).

Within-expert coverage, which we chose before decoding, ranks recovery across the ten held-out \rmpub experts better than the per-prompt coverage of Eq.~\eqref{eq:coverage-score} (Spearman $\rho=0.81$ against 0.45). It also ranks relative KL better ($\rho=-0.49$ against $-0.16$). On \rmrb both scores are below 0.12 for all four held-out experts, yet \ours recovers 0.56 to 0.83 of their gain. A near-zero coverage does not rule out recovery there.

All within-expert scores are below 0.25, under the 0.3 threshold of Fig.~\ref{fig:loo-coverage}, so we split \rmpub at its median. On the more-covered half, $\gamma^\star$ increases median recovery from 0.60 to 0.66 and relative KL from 0.64 to 0.72. On the less-covered half, recovery goes from 0.43 to 0.51 and relative KL from 0.99 to 1.28. Tuning $\gamma$ on $\pi_{\mathrm{tar}}$ brings the less-covered half to 0.67, at a relative KL of 2.04. On \rmrb, $\gamma^\star$ changes median recovery only from 0.79 to 0.81.

A held-out expert can lie close to the span of the other experts even when its reward is far from the span of the other rewards (Fig.~\ref{fig:neural-linear}). For OASST-Pythia-1.4B, the other nine rewards explain 21\% of its reward's variance on held-out prompts and $\hat{\bm{\alpha}}$'s fit from the other log-ratios explains 16\%. Yet the other log-ratios explain 82\% of the expert's own log-ratio. For InternLM2-7B the three numbers are 63\%, 35\% and 92\%. At $\gamma=1$, \ours$(\hat{\bm{\alpha}})$ recovers 0.43 and 0.57 of their gains.

\FloatBarrier
\subsection{Diffusion models: additional results}
\label{app:diffusion}
\suppressfloats[t]

Table~\ref{tab:diffusion-loo} holds out each of the 13 image experts in turn, on three prompts with one seed each (App.~\ref{app:diffusion-setup}). Composing the other twelve recovers on average 0.67 of the held-out expert's reward gain, from 0.25 on aesthetic score to 1.07 on CLIP score. Recovery counts only the gain in the held-out reward. It ignores image quality and likeness to the held-out expert's images. The weights are fit at every denoising step to the held-out expert's own noise prediction, so recovery shows how much of that expert lies in the span of the others. It is not a prediction made without the expert. Span $R^2$ uses the full noise prediction rather than each expert's change to the reference model's prediction. It is above 0.99 for every reward and does not track recovery (Spearman $\rho=0.12$ across rewards). Fig.~\ref{fig:rewards-matrix-diffusion} shows how image rewards vary across images.

\begin{table}[ht]
\centering
\caption{\textbf{Composing the other twelve experts recovers more than half of the held-out expert's reward gain on 9 of the 13 image rewards.} Each row holds out one expert. Recovery is averaged over three prompts. Span $R^2$ is the share of variance in the held-out expert's noise prediction explained by the per-step fit, averaged over the 50 denoising steps and the prompts. $^\dagger$On one prompt the expert barely improves over the reference model (aesthetic score 6.05 against 5.86, PickScore 0.2223 against 0.2216), which makes recovery on that prompt unstable. It is kept in the mean. $^\ddagger$The expert lowers CLIP score on two of the three prompts, where recovery measures how closely \ours reproduces that decrease.}
\label{tab:diffusion-loo}
\begin{tabular}{@{}lrr@{\hspace{2.5em}}lrr@{}}
\toprule
held-out reward & recovery & span $R^2$ & held-out reward & recovery & span $R^2$ \\
\midrule
CLIP score$^\ddagger$ & 1.07 & 0.995 & BRISQUE & 0.96 & 0.994 \\
edge density & 0.86 & 0.991 & entropy & 0.85 & 0.995 \\
incompressibility & 0.79 & 0.995 & compressibility & 0.76 & 0.995 \\
saturation & 0.74 & 0.996 & symmetry & 0.72 & 0.994 \\
RMS contrast & 0.54 & 0.993 & colorfulness & 0.47 & 0.995 \\
PickScore$^\dagger$ & 0.33 & 0.994 & sharpness & 0.33 & 0.994 \\
aesthetic score$^\dagger$ & 0.25 & 0.995 & mean & 0.67 & 0.994 \\
\bottomrule
\end{tabular}
\end{table}

\begin{figure}[ht]
    \centering
    \includegraphics[width=\linewidth]{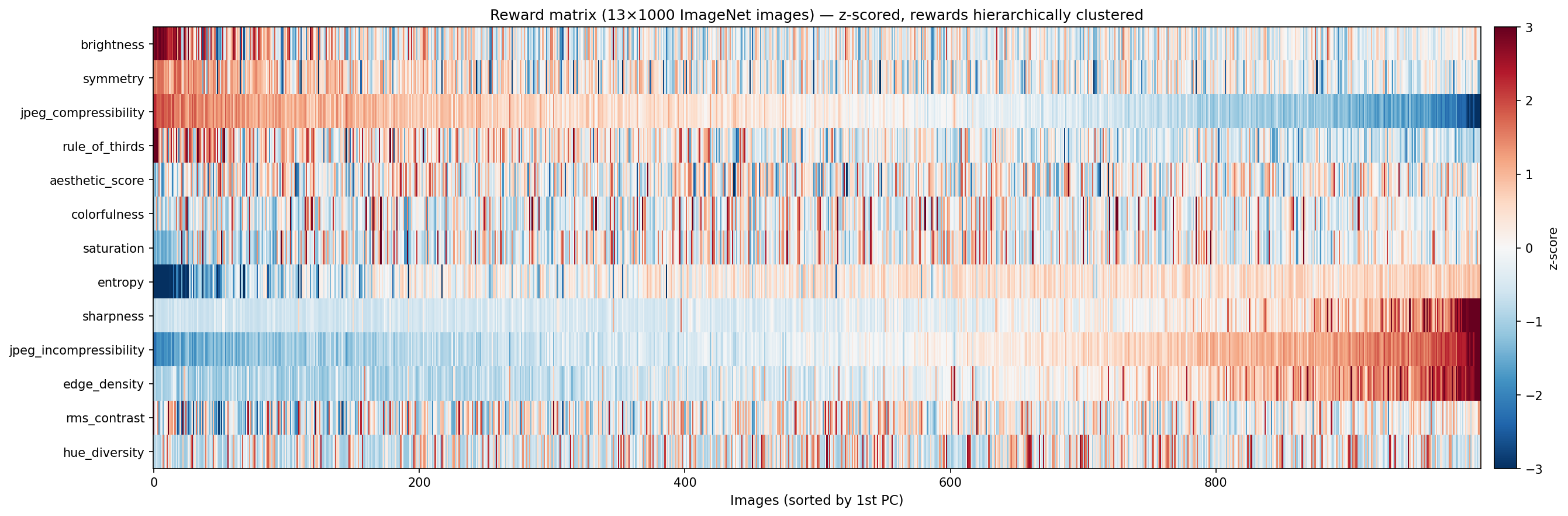}
    \caption{\textbf{Most diffusion rewards vary independently of one another.} Each row is one $z$-scored image reward on 1{,}000 ImageNet photographs, sorted by the first principal component. Brightness, hue diversity and rule of thirds have no expert in \sdddpo, and BRISQUE, PickScore and CLIP score are not shown. Only one group tied to image detail (sharpness, edge density, entropy and the two JPEG rewards) moves together.}
    \label{fig:rewards-matrix-diffusion}
\end{figure}

% \input{old/appendix_rank}

%%%%%%%%%%%%%%%%%%%%%%%%%%%%%%%%%%%%%%%%%%%%%%%%%%%%%%%%%%%%

\clearpage

\end{document}